\DeclareMathOperator*{\oracle}{\textsf{oracle}}
\DeclareMathOperator*{\rank}{rank}
\DeclareMathOperator*{\im}{im}
\DeclareMathOperator*{\hull}{hull}
\DeclareMathOperator*{\argmax}{argmax}
\DeclareMathOperator*{\ggrad}{grad}
\DeclareMathOperator*{\dv}{div}
\DeclareMathOperator*{\curl}{curl}
\DeclareMathOperator*{\expec}{\mathbb E}
\newcounter{todoCounter}
\newcommand*\perf{\ensuremath{v}}
\newcommand*\diverse{\ensuremath{d}}
\newcommand*\bOmega{\ensuremath{\boldsymbol\Omega}}
\newcommand*\bpi{\ensuremath{\boldsymbol\pi}}
\newcommand*\bxi{\ensuremath{\boldsymbol\xi}}
\newcommand*\balpha{\ensuremath{\boldsymbol\alpha}}
\newcommand*\grad{\nabla}
\DeclarePairedDelimiter\floor{\lfloor}{\rfloor}
\newcounter{dbaCounter}
\newcounter{garCounter}
\newcounter{wojCounter}
\newcounter{maxCounter}
\newcommand{\paper}{{\mathfrak p}}
\newcommand{\rock}{{\mathfrak r}}
\newcommand{\scissors}{{\mathfrak s}}
\newcommand{\three}{{\mathfrak T}}
\newcommand{\egs}{{\mathsf{EGS}}}
\newcommand{\fgs}{{\mathsf{FGS}}}
\newcommand{\ffg}{{\mathsf{FFG}}}
\newcommand{\uresp}{{\mathsf{PSRO_U}}}
\newcommand{\psro}{{\mathsf{PSRO_N}}}
\newcommand{\tfold}{{\mathsf{PSRO_{rN}}}}
\newcommand{\psr}{{\mathsf{PSRO}}}
\newcommand{\wt}{{\mathbf w}}
\newcommand{\bJ}{{\mathbf J}}
\newcommand{\bG}{{\mathbf G}}
\newcommand{\bM}{{\mathbf M}}
\newcommand{\br}{{\mathbf r}}
\newcommand{\ft}{{\mathbf f}}
\newcommand{\vt}{{\mathbf v}}
\newcommand{\x}{{\mathbf x}}
\newcommand{\bq}{{\mathbf q}}
\newcommand{\bp}{{\mathbf p}}
\newcommand{\be}{{\mathbf e}}
\newcommand{\bu}{{\mathbf u}}
\newcommand{\bO}{{\mathbf 1}}
\newcommand{\bZ}{{\mathbf 0}}
\newcommand{\bR}{{\mathbb R}}
\newcommand{\pop}{{\mathfrak P}}
\newcommand{\popq}{{\mathfrak Q}}
\newcommand{\popr}{{\mathfrak R}}
\newcommand{\bI}{{\mathbf I}}
\newcommand{\cC}{{\mathcal C}}
\newcommand{\cF}{{\mathcal F}}
\newcommand{\cG}{{\mathcal G}}
\newcommand{\bA}{{\mathbf A}}
\newcommand{\bB}{{\mathbf B}}
\newcommand{\Wt}{{\mathbf W}}
\newcommand{\bc}{{\mathbf c}}
\newcommand{\cP}{{\mathcal P}}
\newcommand{\cQ}{{\mathcal Q}}
\newcommand{\cN}{{\mathcal N}}
\newtheorem{thm}{Theorem}
\newtheorem{prop}[thm]{Proposition}
\newtheorem{lem}[thm]{Lemma}
\newtheorem{defn}{Definition}
\newtheorem{eg}{Example}
\newtheorem{thm*}{Theorem}
\newtheorem{prop*}[thm*]{Proposition}
\newtheorem{lem*}[thm*]{Lemma}
\icmltitlerunning{Open-ended Learning in Symmetric Zero-sum Games}
\begin{document}
\twocolumn[
	\icmltitle{Open-ended Learning in Symmetric Zero-sum Games}
	\begin{icmlauthorlist}
		\icmlauthor{David Balduzzi}{dm}
		\icmlauthor{Marta Garnelo}{dm}
		\icmlauthor{Yoram Bachrach}{dm}
		\icmlauthor{Wojciech M. Czarnecki}{dm}
		\icmlauthor{Julien Perolat}{dm}
		\icmlauthor{Max Jaderberg}{dm}
		\icmlauthor{Thore Graepel}{dm}
	\end{icmlauthorlist}
	\icmlaffiliation{dm}{DeepMind}
	\icmlcorrespondingauthor{}{dbalduzzi@google.com}
	\icmlkeywords{optimization, game theory, multi-agent learning}
	\vskip 0.3in
]

\printAffiliationsAndNotice

\begin{abstract}
    Zero-sum games such as chess and poker are, abstractly, functions that evaluate pairs of agents, for example labeling them `winner' and `loser'. If the game is approximately transitive, then self-play generates sequences of agents of increasing strength. However, nontransitive games, such as rock-paper-scissors, can exhibit strategic cycles, and there is no longer a clear objective -- we want agents to increase in strength, but against whom is unclear. In this paper, we introduce a geometric framework for formulating agent objectives in zero-sum games, in order to construct adaptive sequences of objectives that yield open-ended learning. The framework allows us to reason about population performance in nontransitive games, and enables the development of a new algorithm (rectified Nash response, $\tfold$) that uses game-theoretic niching to construct diverse populations of effective agents, producing a stronger set of  agents than existing algorithms. We apply $\tfold$ to two highly nontransitive resource allocation games and find that $\tfold$ consistently outperforms the existing alternatives.
\end{abstract}

\section{Introduction}

A story goes that a Cambridge tutor in the mid-$19^\text{th}$ century once proclaimed: ``I'm teaching the smartest boy in Britain.'' His colleague retorted: ``I'm teaching the best test-taker.'' Depending on the version of the story, the first boy was either Lord Kelvin or James Clerk Maxwell. The second boy, who indeed scored highest on the Tripos, is long forgotten. 

Modern learning algorithms are outstanding test-takers: once a problem is packaged into a suitable objective, deep (reinforcement) learning algorithms often find a good solution. However, in many multi-agent domains, the question of what test to take, or what objective to optimize, is not clear. This paper proposes algorithms that adaptively and continually pose new, useful objectives which result in open-ended learning in two-player zero-sum games. This setting has a large scope of applications and is general enough to include function optimization as a special case. 

Learning in games is often conservatively formulated as training agents that tie or beat, on average, a fixed set of opponents. However, the dual task, that of \emph{generating useful opponents to train and evaluate against}, is under-studied. It is not enough to beat the agents you know; it is also important to generate better opponents, which exhibit behaviours that you don't know. 

There are very successful examples of algorithms that pose and solve a series of increasingly difficult problems for themselves through forms of self-play~\citep{silver:18, jaderberg2018human, bansal2017emergent, tesauro1995temporal}. Unfortunately, it is easy to encounter nontransitive games where self-play cycles through agents without improving overall agent strength -- simultaneously improving against one opponent and worsening against another. In this paper, we develop a mathematical framework for analyzing nontransitive games, and present algorithms that systematically uncover and solve the latent problems embedded in a game.

\textbf{Overview.}
The paper starts in Section~\ref{s:ffgs} by introducing functional-form games ($\ffg$s) as a new mathematical model of zero-sum games played by parametrized agents such as neural networks. Theorem~\ref{thm:hodge} decomposes any $\ffg$ into a sum of transitive and cyclic components. Transitive games, and closely related monotonic games, are the natural setting for self-play, but the cyclic components, present in non-transitive games, require more sophisticated algorithms which motivates the remainder of the paper. 

The main problem in tackling nontransitive games, where there is not necessarily a best agent, is understanding what the objective should be. In Section~\ref{s:gamescapes}, we formulate the global objective in terms of {\bf gamescapes} -- convex polytopes that encode the interactions between agents in a game. If the game is transitive or monotonic, then the gamescape degenerates to a one-dimensional landscape. In nontransitive games, the gamescape can be high-dimensional because training against one agent can be fundamentally different from training against another.

Measuring the performance of individual agents is vexed in nontransitive games. Therefore, in Section~\ref{s:gamescapes}, we develop tools to analyze populations of agents, including a population-level measure of performance, definition~\ref{def:perf}. An important property of population-level performance is that it increases transitively as the gamescape polytope expands in a nontransitive game. Thus, we reformulate the problem of learning in games from finding the best agent to growing the gamescape. We consider two approaches to do so, one directly performance related, and the other focusing on a measure of diversity, definition~\ref{def:diverse}. Crucially, the measure quantifies diverse \emph{effective behaviors} -- we are not interested in differences in policies that do not lead to differences in outcomes, nor in agents that lose in new and surprising ways.

Section~\ref{l_sect_alg} presents two algorithms, one old and one new, for growing the gamescape. The algorithms can be seen as specializations of the policy space response oracle ($\psr$) introduced in \citet{lanctot:17}. The first algorithm is Nash response ($\psro$), which is an extension to functional-form games of the double oracle algorithm from \citet{mcmahan:03}. Given a population, Nash response creates an objective to train against by averaging over the Nash equilibrium. The Nash serves as a proxy for the notion of `best agent', which is not guaranteed to exist in general zero-sum games. A second, complementary algorithm is the rectified Nash response ($\tfold$). The algorithm amplifies strategic diversity in populations of agents by adaptively constructing \emph{game-theoretic niches} that encourage agents to `play to their strengths and ignore their weaknesses'.

Finally, in Section~\ref{s:exp}, we  investigate the performance of these algorithms in Colonel Blotto \citep{borel:21, tukey:49, roberson:06} and a differentiable analog we refer to as differentiable Lotto. Blotto-style games involve allocating limited resources, and are highly nontransitive. We find that $\tfold$ outperforms $\psro$, both of which greatly outperform self-play in these domains. We also compare against an algorithm that responds to the uniform distribution $\uresp$, which performs comparably to $\psro$.

\textbf{Related work.}
There is a large literature on novelty search, open-ended evolution, and curiosity, which aim to continually expand the frontiers of game knowledge within an agent \citep{lehman:08, taylor:16, banzhaf:16, brant:17, pathak:17, wang:19}. A common thread is that of \emph{adaptive objectives} which force agents to keep improving. For example, in novelty search, the target objective constantly changes -- and so cannot be reduced to a fixed objective to be optimized once-and-for-all.

We draw heavily on prior work on learning in games, especially \citet{heinrich:15, lanctot:17} which are discussed below. Our setting resembles multiobjective optimization \citep{fonseca:93, miettinen:98}. However, unlike multiobjective optimization, we are concerned with \emph{both generating and optimizing} objectives. Generative adversarial networks \citep{goodfellow:14} are zero-sum games that do \emph{not} fall under the scope of this paper due to lack of symmetry, see appendix~\ref{s:gans}.

\textbf{Notation.}
Vectors are columns. The constant vectors of zeros and ones are $\bZ$ and $\bO$. We sometimes use $\bp[i]$ to denote the $i^\text{th}$ entry of vector $\bp$. Proofs are in the appendix. 

\section{Functional-form games ($\ffg$s)}
\label{s:ffgs}

Suppose that, given any pair of agents, we can compute the probability of one beating the other in a game such as Go, Chess, or StarCraft. We formalize the setup as follows.
\begin{defn}
    Let $W$ be a set of agents parametrized by, say, the weights of a neural net. A \textbf{symmetric zero-sum functional-form game} ($\ffg$) is an \emph{antisymmetric} function, $\phi(\vt,\wt) =- \phi(\wt,\vt)$, that evaluates pairs of agents
    \begin{equation}
        \phi : W\times W\rightarrow \bR.
    \end{equation}
    The higher $\phi(\vt, \wt)$, the better for agent $\vt$. We refer to $\phi>0$, $\phi<0$, and $\phi=0$ as wins, losses and ties for $\vt$.
\end{defn}
Note that (i) the strategies in a $\ffg$ are \emph{parametrized agents} and (ii) the parametrization of the agents is folded into $\phi$, so the game is a composite of the agent's architecture and the environment itself. 

Suppose the probability of $\vt$ beating $\wt$, denoted $P(\vt\succ\wt)$ can be computed or estimated. Win/loss probabilities can be rendered into antisymmetric form via $\phi(\vt,\wt) := P(\vt\succ\wt)-\frac{1}{2}$ or $\phi(\vt,\wt) := \log\frac{P(\vt\succ\wt)}{P(\vt\prec\wt)}$.

\textbf{Tools for $\ffg$s.}
Solving $\ffg$s requires different methods to solving normal form games \citep{shoham:08} due to their continuous nature. We therefore develop the following basic tools.

First, the \textbf{curry} operator converts a two-player game into a \emph{function from agents to objectives}
\begin{align}
	\label{eq:curry}
	\Big[\phi:W\times W & \longrightarrow \bR\Big] 
	 \xrightarrow{\text{ curry }}
	& \Big[W \longrightarrow \big[ W \longrightarrow \bR\big]\Big]\quad\\
	\phi(\vt,\wt) & & \wt\mapsto \phi_\wt(\bullet):= \phi(\bullet, \wt)
\end{align}
Second, an \textbf{approximate best-response oracle} that, given agent $\vt$ and objective $\phi_\wt(\bullet)$, returns a new agent $\vt' := \oracle(\vt,\phi_\wt(\bullet))$ with $\phi_\wt(\vt') > \phi_\wt(\vt)+\epsilon$, if possible. The oracle could use gradients, reinforcement learning or evolutionary algorithms.

Third, given a population $\pop$ of $n$ agents, the $(n\times n)$ antisymmetric \textbf{evaluation matrix} is
\begin{equation}
    \bA_\pop := \Big\{\phi(\wt_i,\wt_j)
    \;:\; (\wt_i,\wt_j)\in \pop\times\pop\Big\}=: \phi(\pop\otimes\pop).
\end{equation}
Fourth, we will use the (not necessarily unique) \textbf{Nash equilibrium} on the zero-sum matrix game specified by $\bA_\pop$. 

Finally, we use the following \textbf{game decomposition.} Suppose $W$ is a compact set equipped with a probability measure. The set of integrable antisymmetric functions on $W$ then forms a vector space. Appendix~\ref{s:hodge} shows the following:

\begin{thm}[game decomposition]\label{thm:hodge}
	Every $\ffg$ decomposes into a sum of a transitive and cyclic game
	\begin{equation}
        \ffg = \text{transitive game} \oplus \text{cyclic game}.
    \end{equation}
    with respect to a suitably defined inner product. 
\end{thm}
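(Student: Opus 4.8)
The plan is to recognize the claimed decomposition as an orthogonal (Hilbert-space) splitting — a combinatorial Hodge / Helmholtz decomposition of antisymmetric $L^2$ functions into a ``gradient'' part and a ``divergence-free'' part. Fix the probability measure $\mu$ on $W$ and let $H$ be the real Hilbert space of measurable antisymmetric functions $\phi:W\times W\to\bR$ equipped with $\langle\phi,\psi\rangle := \frac12\int_W\!\int_W \phi(\vt,\wt)\,\psi(\vt,\wt)\,d\mu(\vt)\,d\mu(\wt)$, which is finite for $\phi,\psi\in L^2$. Call $\phi$ \emph{transitive} if $\phi(\vt,\wt)=f(\vt)-f(\wt)$ for some ``rating'' $f\in L^2(W,\mu)$, and let $T\subseteq H$ be the set of all such $\phi$; define a \emph{cyclic} game to be any element of $T^\perp$. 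The theorem then says $H=T\oplus T^\perp$, which is exactly the orthogonal projection theorem once we know that $T$ is a \emph{closed} subspace of $H$.

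Establishing closedness is the one step that needs care, and it is where I expect any difficulty to sit. Since $f\mapsto f(\vt)-f(\wt)$ annihilates constants, we may restrict to mean-zero ratings, i.e.\ $\expec_\mu[f]=0$. For such $f$, expanding the square gives $\|f(\vt)-f(\wt)\|_H^2=\frac12\int_W\!\int_W(f(\vt)-f(\wt))^2\,d\mu\,d\mu=\|f\|_{L^2}^2-(\expec_\mu[f])^2=\|f\|_{L^2}^2$, so $f\mapsto\big((\vt,\wt)\mapsto f(\vt)-f(\wt)\big)$ is an isometry from the closed subspace $\{f\in L^2(W,\mu):\expec_\mu[f]=0\}$ into $H$. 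An isometric image of a complete space is complete, hence closed, and that image is precisely $T$. Orthogonal projection now yields, for every $\phi\in H$, a unique decomposition $\phi=\phi_{\mathrm{tr}}+\phi_{\mathrm{cyc}}$ with $\phi_{\mathrm{tr}}\in T$ and $\phi_{\mathrm{cyc}}\in T^\perp$.

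Finally I would make the two pieces explicit, both to justify calling $T^\perp$ ``cyclic'' and to record a usable formula. Using antisymmetry of $\phi$ together with Fubini, $\langle\phi,f(\vt)-f(\wt)\rangle=\int_W f(\vt)\big(\int_W\phi(\vt,\wt)\,d\mu(\wt)\big)\,d\mu(\vt)$, so $\phi\in T^\perp$ iff $\int_W\phi(\vt,\wt)\,d\mu(\wt)=0$ for $\mu$-a.e.\ $\vt$ — cyclic games are exactly those whose average payoff against the whole population vanishes everywhere, the continuous analogue of a tournament matrix with all row-sums zero (rock--paper--scissors being the prototype). Dually, setting $f(\vt):=\expec_{\wt\sim\mu}[\phi(\vt,\wt)]$ (which lies in $L^2$ by Cauchy--Schwarz and has zero mean by antisymmetry), one checks directly that $\phi-(f(\vt)-f(\wt))$ has vanishing marginals; hence $\phi_{\mathrm{tr}}(\vt,\wt)=f(\vt)-f(\wt)$ and $\phi_{\mathrm{cyc}}=\phi-\phi_{\mathrm{tr}}$, giving the projection in closed form. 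Measurability of $f(\vt)-f(\wt)$, the Fubini steps, and $L^2$-finiteness are routine, and for a finite population $\pop$ everything specializes to the familiar decomposition of the antisymmetric matrix $\bA_\pop$ into a ratings-difference part $(\wt_i,\wt_j)\mapsto f_i-f_j$ and a zero-row-sum remainder.
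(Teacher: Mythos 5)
Your proposal is correct and follows essentially the same route as the paper: identify transitive games with the image of the gradient operator $f\mapsto f(\vt)-f(\wt)$ and cyclic games with the kernel of the divergence $\phi\mapsto\int_W\phi(\cdot,\wt)\,d\wt$, verify orthogonality via Fubini and antisymmetry, and exhibit the decomposition explicitly by taking the rating $f=\dv(\phi)$ and the remainder $\phi-\ggrad(\dv(\phi))$, which is divergence-free since $\dv\circ\ggrad$ is the identity on mean-zero functions. The only real difference is that you additionally establish closedness of the transitive subspace (via the isometry onto mean-zero $L^2$ ratings) so as to invoke the projection theorem, and you work with the standard $L^2$ inner product rather than the (apparently mistyped) one in the paper's statement — both refinements the paper glosses over, but neither changes the substance of the argument.
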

Transitive and cyclic games are discussed below. Few games are purely transitive or cyclic. Nevertheless, understanding these cases is important since general algorithms should, at the very least, work in both special cases.

\subsection{Transitive games} 
A game is \textbf{transitive} if there is a `rating function' $f$ such that performance on the game is the difference in ratings:
\begin{equation}
    \label{eq:transitive}
    \phi(\vt,\wt) = f(\vt) - f(\wt).
\end{equation}
In other words, if  $\phi$ admits a `subtractive factorization'. 

\textbf{Optimization (training against a fixed opponent).}
Solving a transitive game reduces to finding
\begin{equation}
    \vt^* := \argmax_{\vt\in W} \phi_\wt(\vt)
    = \argmax_{\vt\in W} f(\vt).
\end{equation}

Crucially, the choice of opponent $\wt$ makes no difference to the solution. The simplest learning algorithm is thus to train against a fixed opponent, see algorithm~\ref{alg:fixed}. 

\begin{algorithm}
    \caption{Optimization (against a fixed opponent)}\label{alg:fixed}
\begin{algorithmic}
    \STATE {\bfseries input:} opponent $\wt$; agent $\vt_1$
    \STATE fix objective $\phi_\wt(\bullet)$
    \FOR{$t=1,\ldots, T$}
        \STATE $\vt_{t+1} 
        \leftarrow \oracle\big(\vt_{t}, \phi_\wt(\bullet)\big)$ 
    \ENDFOR
    \STATE {\bfseries output:} $\vt_{T+1}$
\end{algorithmic}
\end{algorithm}

\textbf{Monotonic games} generalize transitive games. An $\ffg$ is monotonic if there is a monotonic function $\sigma$ such that 
\begin{equation}
    \label{eq:monotonic}
    \phi(\vt,\wt) = \sigma\big( f(\vt) - f(\wt)\big).
\end{equation}
For example, \citet{Elo78} models the probability of one agent beating another by
\begin{equation}
    \label{eq:elo}
    P(\vt\succ \wt) = \sigma\big( f(\vt) - f(\wt)\big)
    \text{ for }
    \sigma(x) = \frac{1}{1 + e^{-\alpha \cdot x}}
\end{equation}
for some $\alpha>0$, where $f$ assigns Elo ratings to agents. The model is widely used in Chess, Go and other games.

Optimizing against a fixed opponent fares badly in monotonic games. Concretely, if Elo's model holds then training against a much weaker opponent yields no learning signal because the gradient vanishes $\grad_\vt\phi(\vt_t,\wt) \approx 0$ once the sigmoid saturates when $f(\vt_t)\gg f(\wt)$. 

\textbf{Self-play (algorithm~\ref{alg:selfplay})} 
generates a sequence of opponents. Training against a sequence of opponents of increasing strength prevents gradients from vanishing due to large skill differentials, so self-play is well-suited to games modeled by eq.~\eqref{eq:monotonic}. Self-play has proven effective in Chess, Go and other games \citep{silver:18, shedivat:18}.

\begin{algorithm}
    \caption{Self-play}\label{alg:selfplay}
\begin{algorithmic}
    \STATE {\bfseries input:} agent $\vt_1$
    \FOR{$t=1,\ldots, T$}
        \STATE $\vt_{t+1} 
        \leftarrow \oracle\big(\vt_{t}, \phi_{\vt_t}(\bullet)\big)$ 
    \ENDFOR
    \STATE {\bfseries output:} $\vt_{T+1}$
\end{algorithmic}
\end{algorithm}

Self-play \emph{is} an open-ended learning algorithm: it poses and masters a sequence of objectives, rather than optimizing a pre-specified objective. However, self-play assumes transitivity: that local improvements ($\vt_{t+1}$ beats $\vt_t$) imply global improvements ($\vt_{t+1}$ beats $\vt_1, \vt_2, \ldots, \vt_t$). The assumption fails in nontransitive games, such as the disc game below. Since performance is nontransitive, improving against one agent does not guarantee improvements against others. 

\subsection{Cyclic games} 

A game is \textbf{cyclic} if 
\begin{equation}
    \label{eq:cyclic}
    \int_W \phi(\vt,\wt)\cdot d\wt = 0
    \quad\text{for all}\quad
    \vt\in W.
\end{equation}
In other words, wins against some agents are necessarily counterbalanced with losses against others. Strategic cycles often arise when agents play simultaneous move or imperfect information games such as rock-paper-scissors, poker, or StarCraft.

\begin{eg}[Disc game]\label{eg:disc}
    Fix $k>0$. Agents are $W = \{\x\in\bR^2\,:\,\|\x\|_2^2\leq k \}$ with the uniform distribution. Set 
    \begin{equation}
        \phi(\vt, \wt) 
        = \vt^\intercal \cdot\left(\begin{matrix}
            0 & -1 \\
            1 & 0
        \end{matrix}\right)\cdot \wt
        = v_1w_2 - v_2w_1.
    \end{equation}
    The game is cyclic, see figure~\ref{f:rps}A.       
\end{eg}

\begin{eg}[Rock-paper-scissors embeds in disc game]\label{eg:rps} 
    Set
    $\rock_\epsilon = \frac{\sqrt{3}\epsilon}{2} (\cos 0, \sin0)$, $\paper_\epsilon = \frac{\sqrt{3}\epsilon}{2} (\cos \frac{2\pi}{3},\sin \frac{2\pi}{3})$ and $\scissors_\epsilon = \frac{\sqrt{3}\epsilon}{2} (\cos \frac{4\pi}{3},\sin \frac{4\pi}{3})$ to obtain
	\begin{equation}
        \bA_{\{\rock_\epsilon,\paper_\epsilon,\scissors_\epsilon\}} =  \left[\begin{matrix}
            0 & \epsilon^2 & -\epsilon^2 \\
            -\epsilon^2 & 0 & \epsilon^2 \\
            \epsilon^2 & -\epsilon^2 & 0
        \end{matrix}\right].
    \end{equation}
    Varying $\epsilon\in[0,1]$ yields a family of $\rock$-$\paper$-$\scissors$ interactions that trend deterministic as $\epsilon$ increases, see figure~\ref{f:rps}B.
\end{eg}

\begin{figure*}[t]
    {\center
    \includegraphics[width=.95\textwidth]{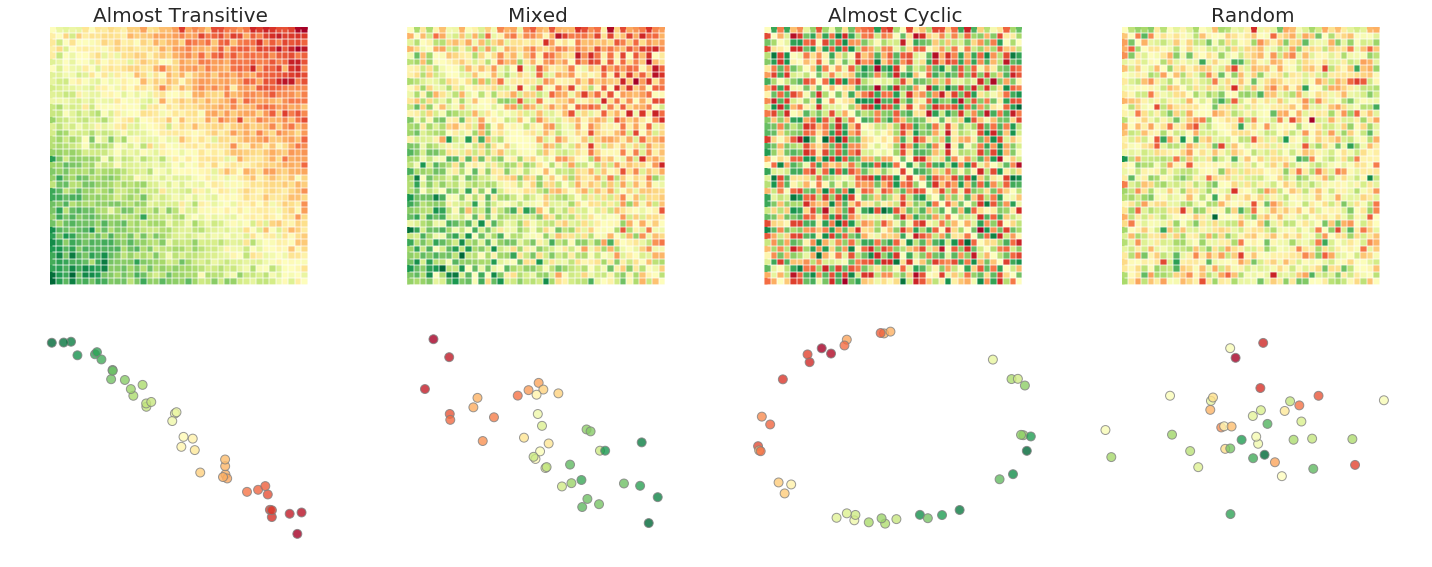}}
    \vspace{-5mm}
    \caption{\emph{Low-dim gamescapes of various basic game structures.}
    \textbf{Top row:}
    Evaluation matrices of populations of 40 agents each; colors vary from red to green as $\phi$ ranges over $[-1,1]$. 
    \textbf{Bottom row:}
    2-dim embedding obtained by using first 2 dimensions of Schur decomposition of the payoff matrix; Color corresponds to average payoff of an agent against entire population;
    $\egs$ of the transitive game is a line;
    $\egs$ of the cyclic game is two-dim near-circular polytope given by convex hull of points.
    For extended version see Figure~\ref{f:embeddings} in the Appendix.
    }
    \label{f:two_leaderboards}
\end{figure*}

Our goal is to extend self-play to general $\ffg$s.
The success of optimization and self-play derives from \textbf{(i)} repeatedly applying a local operation that  \textbf{(ii)} improves a transitive measure. If the measure is \emph{not} transitive, then applying a sequence of local improvements can result in no improvement at all. Our goal is thus to find practical substitutes for \textbf{(i)} and \textbf{(ii)} in general $\ffg$s.

\section{Functional and Empirical Gamescapes}
\label{s:gamescapes}

Rather than trying to find a single dominant agent which may not exist, we seek to find all the atomic components in ``strategy space'' of a zero-sum game. That is, we aim to discover the underlying strategic dimensions of the game, and the best ways of executing them. Given such knowledge, when faced with a new opponent, we will not only be able to react to its behavior conservatively (using the Nash mixture to guarantee a tie), but will also be able to optimally exploit the opponent. As opposed to typical game-theoretic solutions, we do not seek a single agent or mixture, but rather a population that embodies a complete understanding of the strategic dimensions of the game.

To formalize these ideas we introduce \textbf{gamescapes}, which geometrically represent agents in functional form games. We show some general properties of these objects to build intuitions for the reader. Finally we introduce two critical concepts: \textbf{population performance}, which measures the progress in performance of populations, and \textbf{effective diversity}, which quantifies the coverage of the gamescape spanned by a population. Equipped with these tools we present algorithms that guarantee iterative improvements in $\ffg$s.

\begin{defn}
	The \textbf{functional gamescape} ($\fgs$) of $\phi:W\times W\rightarrow \bR$ is the convex set
	\begin{equation}
		\cG_\phi := \hull \left(\Big\{\phi_\wt(\bullet)\,:\,\wt\in W\Big\}\right)\subset \cC(W,\bR),
	\end{equation}
	where $\cC(W,\bR)$ is the space of real-valued functions on $W$.
	
	Given population $\pop$ of $n$ agents with evaluation matrix $\bA_\pop$, the corresponding \textbf{empirical gamescape} ($\egs$) is
    \begin{equation}
	    \cG_\pop := \big\{\text{convex mixtures of rows of }\bA_\pop\big\}.
    \end{equation}
\end{defn}
The $\fgs$ represents all the mixtures of objectives implicit in the game. We cannot work with the $\fgs$ directly because we cannot compute $\phi_\wt(\bullet)$ for infinitely many agents. The $\egs$ is a tractable proxy \citep{Wellman06}. The two gamescapes represent all the ways agents can-in-principle and are-actually-observed-to interact respectively. The remainder of this section collects basic facts about gamescapes.

\textbf{Optimization landscapes}
are a special case of gamescapes. If $\phi(\vt, \wt) =f(\vt)-f(\wt)$ then the $\fgs$ is, modulo constant terms, a single function $\cG_\phi=\big\{f(\bullet) - f(\wt) : \wt\in W\big\}$. The $\fgs$ degenerates into a landscape where, for each agent $\vt$ there is a unique direction $\grad \phi_\wt(\vt) = \grad f(\vt)$ in weight space which gives the steepest performance increase \emph{against all opponents}. In a monotonic game, the gradient is $\grad \phi_\wt(\vt) = \sigma'\cdot \grad f(\vt)$. There is again a single steepest direction $\grad f(\vt)$, with tendency to vanish controled by the ratings differential $\sigma' = \sigma'\big(f(\vt) - f(\wt)\big) \geq 0$. 

\textbf{Redundancy.}
First, we argue that gamescapes are more fundamental than evaluation matrices. Consider 
\begin{equation}
    \left[\begin{matrix}
        0 & 1 & -1 \\
        -1 & 0 & 1 \\
        1 & -1 & 0
    \end{matrix}\right]
    \quad\text{and}\quad
    \left[\begin{matrix}
        0 & 1 & -1 & -1 \\
        -1 & 0 & 1 & 1 \\
        1 & -1 & 0 & 0 \\
        1 & -1 & 0 & 0 
    \end{matrix}\right].
\end{equation}
The first matrix encodes rock-paper-scissors interactions; the second is the same, but with two copies of scissors. The matrices are difficult to compare since their dimensions are different. Nevertheless, the gamescapes are equivalent triangles embedded in $\bR^3$ and $\bR^4$ respectively. 

\begin{prop}\label{prop:invariance}
    An agent in a population is redundant if it is behaviorally identical to a convex mixture of other agents. The $\egs$ is \textbf{invariant} to redundant agents. 
\end{prop}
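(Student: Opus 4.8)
\emph{Proof proposal.} First I would fix notation: order the population so that the redundant agent is the last one, $\wt_n$, and set $\pop' := \pop\setminus\{\wt_n\}$, whose evaluation matrix $\bA_{\pop'}$ is the top-left $(n-1)\times(n-1)$ block of $\bA_\pop$. By hypothesis there is a distribution $\bLambda=(\lambda_1,\dots,\lambda_{n-1})$ on $\pop'$ with $\phi_{\wt_n}(\bullet)=\sum_{j<n}\lambda_j\,\phi_{\wt_j}(\bullet)$ as functions. Evaluating this identity at each $\wt_i\in\pop$ gives $\bA_\pop[i,n]=\sum_{j<n}\lambda_j\,\bA_\pop[i,j]$, i.e.\ column $n$ of $\bA_\pop$ is the $\bLambda$-convex combination of its first $n-1$ columns; applying antisymmetry, $\bA_\pop[n,i]=-\bA_\pop[i,n]=\sum_{j<n}\lambda_j\,\bA_\pop[j,i]$, so row $n$ is the same convex combination of the first $n-1$ rows. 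These two linear relations drive the whole argument.

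Next I would introduce the coordinate projection $\pi:\bR^n\to\bR^{n-1}$ deleting the last coordinate, and show $\pi$ restricts to a bijection $\cG_\pop\to\cG_{\pop'}$. A generic point of $\cG_\pop$ is $\bq=\bp^\intercal\bA_\pop$ with $\bp\in\Delta_n$; using the column relation, its last entry satisfies $\bq[n]=\bLambda^\intercal\pi(\bq)$, so $\bq$ is determined by $\pi(\bq)$ — hence $\pi$ is injective on $\cG_\pop$, and $\cG_\pop$ is the graph over $\pi(\cG_\pop)$ of the fixed linear functional $\bu\mapsto\bLambda^\intercal\bu$. It then remains to identify $\pi(\cG_\pop)$ with $\cG_{\pop'}$. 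For $\subseteq$, given $\bp\in\Delta_n$ I would transfer mass off the redundant agent by setting $p'_j:=p_j+p_n\lambda_j$ for $j<n$; one checks $\bp'\in\Delta_{n-1}$ (its entries sum to $(1-p_n)+p_n=1$) and, using the row relation, $\bp'^\intercal\bA_{\pop'}=\pi(\bp^\intercal\bA_\pop)$. For $\supseteq$, given $\bp'\in\Delta_{n-1}$ I would pad it with a zero to get $\bp=(\bp',0)\in\Delta_n$ and note $\pi(\bp^\intercal\bA_\pop)=\bp'^\intercal\bA_{\pop'}$. Being the restriction of a linear map with a linear (graph) inverse, $\pi$ is therefore an affine isomorphism of convex polytopes $\cG_\pop\xrightarrow{\ \sim\ }\cG_{\pop'}$, which is the precise sense in which the $\egs$ is unchanged when the redundant agent is dropped; removing several redundant agents follows by iteration.

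I would also remark that this can be phrased more invariantly via the $\fgs$: both $\cG_\pop$ and $\cG_{\pop'}$ arise as images of the probability simplices on $\pop$ and $\pop'$ under $\bp\mapsto\sum_i p_i\,\phi_{\wt_i}(\bullet)$ restricted to the population, and behavioral identity of $\wt_n$ with $\sum_j\lambda_j\wt_j$ means precisely that these two affine maps have the same image; the matrix computation above is the coordinate bookkeeping of this, together with the observation that antisymmetry makes the extra coordinate redundant too.

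The one genuine subtlety — and the only place care is needed — is that $\cG_\pop$ and $\cG_{\pop'}$ literally live in different ambient spaces ($\bR^n$ versus $\bR^{n-1}$), so ``invariant'' must be read up to the canonical identification supplied by $\pi$ and its graph inverse; making that identification explicit, and verifying it is a bijection rather than merely a surjection, is the crux. Everything else is elementary linear algebra powered by the two relations of the first step.
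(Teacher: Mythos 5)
Your argument is correct and follows essentially the same route as the paper's: both express the redundant agent's row and column as the $\bLambda$-convex combination of the others via antisymmetry, and both identify the two gamescapes through a coordinate projection together with an explicit linear section (your ``graph inverse'' plays the role of the paper's left inverse $\bxi$). The only differences are cosmetic --- the paper removes a whole block of redundant agents at once via a stochastic matrix $\bM$ where you remove one at a time and iterate, and your mass-transfer step $p'_j := p_j + p_n\lambda_j$ spells out the surjectivity that the paper asserts with ``it follows that.''
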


Invariance is explained in appendix~\ref{s:proofs}.

\textbf{Dimensionality.}
The dimension of the gamescape is an indicator of the complexity of both the game and the agents playing it. In practice we find many $\ffg$s have a low dimensional latent structure.

Figure~\ref{f:two_leaderboards} depicts evaluation  matrices of four populations of 40 agents. Although the gamescapes could be 40-dim, they turn out to have one- and two-dim representations. The dimension of the $\egs$ is determined by the rank of the evaluation matrix. 

\begin{prop}\label{prop:schur}
    The $\egs$ of $n$ agents in population $\pop$ can be represented in $\bR^r$, where $r=\rank(\bA_\pop)\leq n$.
\end{prop}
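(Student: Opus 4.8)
The plan is to exhibit an explicit isometric embedding of $\cG_\pop$ into $\bR^r$. Recall that $\cG_\pop = \hull\{\ba_1,\dots,\ba_n\}$, where $\ba_1,\dots,\ba_n\in\bR^n$ are the rows of $\bA_\pop$; a priori this polytope lives in $\bR^n$. The key observation is that all the $\ba_i$ lie in the row space $R := \mathrm{rowspace}(\bA_\pop)\subseteq\bR^n$, a linear subspace of dimension $r=\rank(\bA_\pop)$, and hence so does every convex mixture of them. Thus $\cG_\pop\subseteq R$, and it suffices to identify $R$ isometrically with $\bR^r$.

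First I would fix an orthonormal basis $\bu_1,\dots,\bu_r$ of $R$ and collect it into $\bU = [\bu_1\,\cdots\,\bu_r]\in\bR^{n\times r}$, so that $\bU^\intercal\bU = \bI_r$ and $\bU\bU^\intercal$ is the orthogonal projector onto $R$. Since $\bA_\pop$ is antisymmetric it is normal, so such a basis is produced concretely by its real Schur decomposition, whose canonical form is block-diagonal with $2\times 2$ skew blocks (this is the construction behind the embeddings in Figure~\ref{f:two_leaderboards}); a basis obtained from an SVD or Gram–Schmidt would serve equally well. Define the linear map $\Pi:\bR^n\to\bR^r$ by $\Pi(\x) = \bU^\intercal\x$.

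Next I would verify that $\Pi$ restricted to $R$ is an isometry onto $\bR^r$: for $\x,\y\in R$ we have $\x = \bU\bU^\intercal\x$ and $\y = \bU\bU^\intercal\y$, so $\langle\Pi(\x),\Pi(\y)\rangle = \x^\intercal\bU\bU^\intercal\y = \x^\intercal\y$, while surjectivity is immediate since $\Pi(\bU\z) = \z$ for every $\z\in\bR^r$. Because $\Pi$ is affine (indeed linear), it commutes with convex hulls, giving $\Pi(\cG_\pop) = \hull\{\bU^\intercal\ba_1,\dots,\bU^\intercal\ba_n\}\subseteq\bR^r$, and the previous sentence shows this restriction is a bijective isometry from $\cG_\pop$ onto its image. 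Hence $\cG_\pop$ is represented without distortion in $\bR^r$, and $r = \rank(\bA_\pop)\le n$ since $\bA_\pop$ has $n$ columns.

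The argument is essentially a change of coordinates, so I do not anticipate a serious obstacle; the only point that needs a word of care is the meaning of "represented" -- one should confirm that the embedding preserves not merely the combinatorial type of the polytope but its metric geometry, since the later quantities built on $\cG_\pop$ (population performance, Definition~\ref{def:perf}, and the effective diversity measure) are computed from that geometry. Choosing an orthonormal $\bU$, rather than an arbitrary basis of $R$, is exactly what secures this; if one only cared about the affine or combinatorial structure, any linear isomorphism $R\cong\bR^r$ would do and orthonormality would be unnecessary.
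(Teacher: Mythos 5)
Your proof is correct and is essentially the paper's argument: both identify the convex hull of the rows of $\bA_\pop$ with its image in the $r$-dimensional row space under an orthonormal change of coordinates, the paper constructing that basis concretely via the Schur decomposition $\bA_\pop = \Wt\bJ\Wt^\intercal$ of an antisymmetric matrix while you observe that any orthonormal basis of the row space works. Your version is marginally cleaner in that it makes the isometry explicit and does not rely on antisymmetry at all (it only enters the paper's proof through the incidental fact that $r$ is even).
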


A low-dim representation of the $\egs$ can be constructed via the Schur decomposition, which is the analog of PCA for antisymmetric matrices \citep{reval:18}. The length of the longest strategic cycle in a game gives a lower-bound on the dimension of its gamescape:

\begin{eg}[latent dimension of long cycles]\label{eg:long}
    Suppose $n$ agents form a long cycle: $\pop = \{\vt_1\xrightarrow{\text{beats}}\vt_2\rightarrow\cdots\rightarrow\vt_n\xrightarrow{\text{beats}}\vt_1\}$. Then $\text{rank}(\bA_\pop)$ is $n-2$ if $n$ is even and $n-1$ if $n$ is odd.
\end{eg}

\textbf{Nash equilibria}
in a symmetric zero-sum game are (mixtures of) agents that beat or tie all other agents. Loosely speaking, they replace the notion of best agent in games where there is no best agent. Functional Nash equilibria, in the $\fgs$, are computationally intractable so we work with empirical Nash equilibria over the evaluation matrix. 

\begin{prop}\label{prop:nash}
    Given population $\pop$, the empirical Nash equilibria are 
    \begin{equation}
	    \cN_\pop 
	    = \{\bp \text{ distribution}: \bp^\intercal\bA_\pop\succeq\bZ\}.
    \end{equation}
\end{prop}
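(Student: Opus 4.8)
The plan is to reduce the statement to two classical facts about finite two-player zero-sum games — von Neumann's minimax theorem, and the characterization of Nash equilibria as pairs of security (maximin/minimax) strategies whose payoff equals the value — and then to exploit antisymmetry of $\bA_\pop$. Set up the matrix game in which the row player picks a distribution $\bp$ over the $n$ agents, the column player picks $\bq$, the row player receives $\bp^\intercal\bA_\pop\bq$ and the column player its negative. By the minimax theorem the game has a value $v$, and $(\bp^*,\bq^*)$ is a Nash equilibrium if and only if $\min_\bq \bp^{*\intercal}\bA_\pop\bq = v = \max_\bp \bp^\intercal\bA_\pop\bq^*$ (so each coordinate is a best response to the other, and each is optimal in the one-sided sense).

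Next, pin down $v = 0$ using $\bA_\pop^\intercal = -\bA_\pop$. For any distribution $\bp$, antisymmetry gives $\bp^\intercal\bA_\pop\bp = 0$, hence $\min_\bq \bp^\intercal\bA_\pop\bq \le 0$ for every $\bp$, so $v = \max_\bp\min_\bq \bp^\intercal\bA_\pop\bq \le 0$; symmetrically, $\max_\bp \bp^\intercal\bA_\pop\bq \ge \bq^\intercal\bA_\pop\bq = 0$ for every $\bq$ gives $v = \min_\bq\max_\bp \bp^\intercal\bA_\pop\bq \ge 0$. Therefore $v = 0$.

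Now the key computation: a distribution $\bp$ is a maximin strategy exactly when $\min_\bq \bp^\intercal\bA_\pop\bq = 0$. Since $\bq\mapsto \bp^\intercal\bA_\pop\bq$ is linear, its minimum over the simplex is attained at a vertex $\be_i$, so $\min_\bq \bp^\intercal\bA_\pop\bq = \min_i (\bp^\intercal\bA_\pop)[i]$; as this quantity is always $\le 0$ by the previous paragraph, it equals $0$ if and only if $\bp^\intercal\bA_\pop\succeq\bZ$. For the column player, rewriting its payoff $-\bp^\intercal\bA_\pop\bq = \bq^\intercal\bA_\pop\bp$ (using antisymmetry) shows the column player faces the \emph{same} game with roles relabeled, so its minimax strategies form the same set $\{\bp:\bp^\intercal\bA_\pop\succeq\bZ\}$. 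Combining with the previous paragraph, the Nash equilibria are exactly the pairs drawn from this set, and in particular every such $\bp$ gives a symmetric equilibrium $(\bp,\bp)$; interpreting $\cN_\pop$ as the set of equilibrium strategies yields $\cN_\pop = \{\bp \text{ distribution}: \bp^\intercal\bA_\pop\succeq\bZ\}$.

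I expect the only real subtlety to be bookkeeping around sign conventions — which player maximizes, and the fact that once the column player's objective is transposed into standard form it is governed by $-\bA_\pop^\intercal = \bA_\pop$ again, which is what makes the maximin and minimax sets coincide — together with the routine observation that a linear functional on the probability simplex attains its extremum at a vertex. There is no hard analytic content: everything rests on antisymmetry of $\bA_\pop$ and the minimax theorem, plus a remark clarifying that ``Nash equilibria'' here refers to the (symmetric) equilibrium strategies.
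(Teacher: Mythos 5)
Your proposal is correct and follows essentially the same route as the paper: both arguments rest on the standard LP/minimax characterization of optimal strategies in a finite zero-sum matrix game, combined with the observation that antisymmetry of $\bA_\pop$ forces the value of the game to be zero, after which the condition $\bp^\intercal\bA_\pop\succeq\bZ$ drops out immediately. You simply spell out the value-is-zero step and the reduction of $\min_\bq \bp^\intercal\bA_\pop\bq$ to a minimum over vertices, which the paper leaves implicit.
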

In other words, Nash equilibria correspond to points in the  empirical gamescape that intersect the positive quadrant $\{\x \in \cG_\pop: \x\succeq\bZ\}$. The positive quadrant thus provides a set of directions in weight space to aim for when training new agents, see $\psro$ below. 

\textbf{The gap between the $\egs$ and $\fgs$.}
Observing $\rock$-$\paper$ interactions yields different conclusions from observing $\rock$-$\paper$-$\scissors$ interactions;
it is always possible that an agent that appears to be dominated is actually part of a partially observed cycle. Without further assumptions about the structure of $\phi$, it is impossible to draw strong conclusions about the nature of the $\fgs$ from the $\egs$ computed from a finite population. The gap is analogous to the exploration problem in reinforcement learning. To discover unobserved dimensions of the $\fgs$ one could train against randomized distributions over opponents, which would eventually find them all.

\subsection{Population performance}

If $\phi(\vt,\wt)=f(\vt)-f(\wt)$ then improving performance of agent $\vt$ reduces to increasing $f(\vt)$. In a cyclic game, the performance of individual agents is meaningless: beating one agent entails losing against another by eq.~\eqref{eq:cyclic}. We therefore propose a \emph{population} performance measure. 

\begin{defn}\label{def:perf}
	Given populations $\pop$ and $\popq$, let $(\bp,\bq)$ be a Nash equilibrium of the zero-sum game on $\bA_{\pop, \popq}:=\phi(\vt,\wt)_{\vt\in\pop, \wt\in\popq}$. The \textbf{relative population performance} is 
    \begin{equation}
        \perf(\pop, \popq) 
        := \bp^\intercal\cdot \bA_{\pop, \popq} \cdot \bq 
        = \sum_{i,j=1}^{n_1,n_2} A_{ij} \cdot p_i q_j.
    \end{equation}    
\end{defn}
\begin{prop}\label{prop:perf}
	\textbf{(i)} 
	Performance $\perf$ is independent of the choice of Nash equilibrium. 
	\textbf{(ii)} 
	If $\phi$ is monotonic then performance compares the best agents in each population
	\begin{equation}
    	\perf(\pop,\popq) 
    	= \max_{\vt\in\pop} f(\vt) 
    	- \max_{\wt\in\popq} f(\wt).
	\end{equation}
	\textbf{(iii)}
	If $\hull(\pop) \subset \hull(\popq)$ then $\perf(\pop,\popq)\leq0$ and $\perf(\pop, \popr)\leq \perf(\popq,\popr)$ for \textbf{any} population $\popr$.
\end{prop}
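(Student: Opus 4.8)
All three parts reduce to the minimax theorem for finite zero-sum matrix games, so I would take them in order. \textbf{(i)} is immediate: $\perf(\pop,\popq)$ is by construction the value of the matrix game with payoff $\bA_{\pop,\popq}$. Von Neumann's minimax theorem gives $\max_{\bp}\min_{\bq}\bp^\intercal\bA_{\pop,\popq}\bq=\min_{\bq}\max_{\bp}\bp^\intercal\bA_{\pop,\popq}\bq$, and every Nash pair attains this common value; hence $\perf$ is well-defined and does not depend on the equilibrium chosen (existence of an equilibrium also comes for free, since the populations are finite).

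\textbf{(ii)} Write $\phi(\vt,\wt)=\sigma\big(f(\vt)-f(\wt)\big)$ with $\sigma$ odd and nondecreasing, and pick $\vt^\star\in\argmax_{\vt\in\pop}f(\vt)$ and $\wt^\star\in\argmax_{\wt\in\popq}f(\wt)$. Monotonicity of $\sigma$ shows that the pure strategy $\vt^\star$ weakly dominates every row of $\bA_{\pop,\popq}$, and that $\wt^\star$ weakly dominates (for the minimizing column player) every column; so $(\delta_{\vt^\star},\delta_{\wt^\star})$ is a Nash equilibrium, and by (i) $\perf(\pop,\popq)=\phi(\vt^\star,\wt^\star)=\sigma\big(\max_{\pop}f-\max_{\popq}f\big)$. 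This is the asserted comparison of best agents --- an increasing function of $\max_{\pop}f-\max_{\popq}f$, equal to it exactly in the transitive case $\sigma=\mathrm{id}$.

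\textbf{(iii)} I read $\hull(\pop)\subseteq\hull(\popq)$ in the functional gamescape: every convex combination of $\{\phi_\vt(\bullet):\vt\in\pop\}$ equals, as a function on $W$, some convex combination of $\{\phi_\wt(\bullet):\wt\in\popq\}$. The engine of both inequalities is a \emph{mirroring} trick. Fix an arbitrary population $\popr$ and a Nash equilibrium $(\bp,\bs)$ of $\bA_{\pop,\popr}$. The mixture $g:=\sum_i p_i\,\phi_{\vt_i}(\bullet)$ lies in $\hull(\pop)\subseteq\hull(\popq)$, so $g=\sum_k r_k\,\phi_{\wt_k}(\bullet)$ for some distribution $\br$ on $\popq$. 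Evaluating at an arbitrary agent $\z$ and using antisymmetry, $\sum_k r_k\,\phi(\wt_k,\z)=-g(\z)=\sum_i p_i\,\phi(\vt_i,\z)$; hence $\br$ reproduces exactly the payoff row of $\bp$ against any opponent, i.e.\ $\br^\intercal\bA_{\popq,\popr}\bs'=\bp^\intercal\bA_{\pop,\popr}\bs'$ for every distribution $\bs'$ on $\popr$. Since the row player in the $(\popq,\popr)$ game may always choose $\br$,
\begin{equation}
\perf(\popq,\popr)=\max_{\bq'}\min_{\bs'}\bq'^\intercal\bA_{\popq,\popr}\bs'\;\geq\;\min_{\bs'}\br^\intercal\bA_{\popq,\popr}\bs'=\min_{\bs'}\bp^\intercal\bA_{\pop,\popr}\bs'=\perf(\pop,\popr),
\end{equation}
the last equality because $\bp$ is a maximin strategy in $\bA_{\pop,\popr}$. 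This is the second assertion; the first, $\perf(\pop,\popq)\leq0$, is the special case $\popr:=\popq$, using that $\bA_{\popq,\popq}$ is antisymmetric so the symmetric zero-sum game it defines has value $\perf(\popq,\popq)=0$.

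The work is entirely bookkeeping rather than a deep idea: getting the reading of ``$\hull(\pop)\subseteq\hull(\popq)$'' right (it must be the functional version, so that convex mixtures transfer between populations), keeping the row/column roles and the signs straight through the antisymmetry computations, and --- for (ii) --- noting explicitly that monotonicity is precisely what makes the top-rated agents dominant strategies, so the equilibrium can be taken pure. Everything else is the minimax theorem together with the observation that an antisymmetric payoff makes any fixed mixture tie a copy of itself.
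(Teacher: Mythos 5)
Your proof is correct and follows essentially the same route as the paper's: (i) is the minimax theorem, (ii) identifies the top-rated agents as dominant strategies, and (iii) uses the fact that any Nash mixture over $\pop$ can be mirrored by a mixture over $\popq$ with the identical payoff row, which is exactly the paper's "reconstruction" argument (your version just makes the maximin bookkeeping and the $\popr:=\popq$ specialization explicit). One remark worth keeping: your more careful treatment of (ii) shows the equilibrium value is $\sigma\big(\max_{\vt\in\pop}f(\vt)-\max_{\wt\in\popq}f(\wt)\big)$, so the identity as literally stated in the proposition holds only when $\sigma=\mathrm{id}$ (or when the two maxima coincide); the paper's own proof likewise only establishes the $\sigma$-wrapped version before asserting that "the result follows immediately," so you have correctly located a small imprecision in the statement rather than a gap in your argument.
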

The first two properties are sanity checks. Property \emph{\textbf{(iii)}} implies growing the polytope spanned by a population improves its performance \emph{against any other population}.

Consider the concentric rock-paper-scissors populations in figure~\ref{f:rps}B and example~\ref{eg:rps}. The Nash equilibrium is $(0,0)$, which is a uniform mixture over any of the populations. Thus, the relative performance of any two populations is zero. However, the outer population is better than the inner population at \emph{exploiting} an opponent that only plays, say, rock because the outer version of paper wins more deterministically than the inner version.

Finding a population that contains the Nash equilibrium is necessary but not sufficient to fully solve an $\ffg$. For example, adding the ability to always force a tie to an $\ffg$ makes finding the Nash trivial. However, the game can still exhibit rich strategies and counter-strategies that are worth discovering.

\subsection{Effective diversity}
Measures of diversity typically quantify differences in weights or behavior of agents but ignore performance. \emph{Effective} diversity measures the variety of effective agents (agents with support under Nash):

\begin{defn}\label{def:diverse}
    Denote the rectifier by $\floor*{x}_+:= x$ if $x\geq 0$ and $\floor*{x}_+ := 0$ otherwise. Given population $\pop$, let $\bp$ be a Nash equilibrium on $\bA_\pop$. The \textbf{effective diversity} of the population is:
    \begin{equation}
        \diverse(\pop) 
        := \bp^\intercal\cdot  \floor*{\bA_\pop}_+ \cdot \bp
        = \sum_{i,j=1}^n \floor*{\phi(\wt_i,\wt_j)}_+ \cdot p_ip_j.
    \end{equation}
\end{defn}
Diversity quantifies how the best agents (those with support in the maximum entropy Nash) exploit each other. If there is a dominant agent then diversity is zero. 

Effective diversity is a matrix norm, see appendix~\ref{s:div_norm}. It measures the $\ell_{1,1}$ volume spanned by Nash supported agents. In figure~\ref{f:rps}B, there are four populations spanning concentric gamescapes: the Nash at $(0,0)$ and three variants of $\rock$-$\paper$-$\scissors$. Going outwards to large gamescapes yields agents that are more diverse and better exploiters.

\begin{figure}[t]  
    {\center
    \includegraphics[width=.5\textwidth]{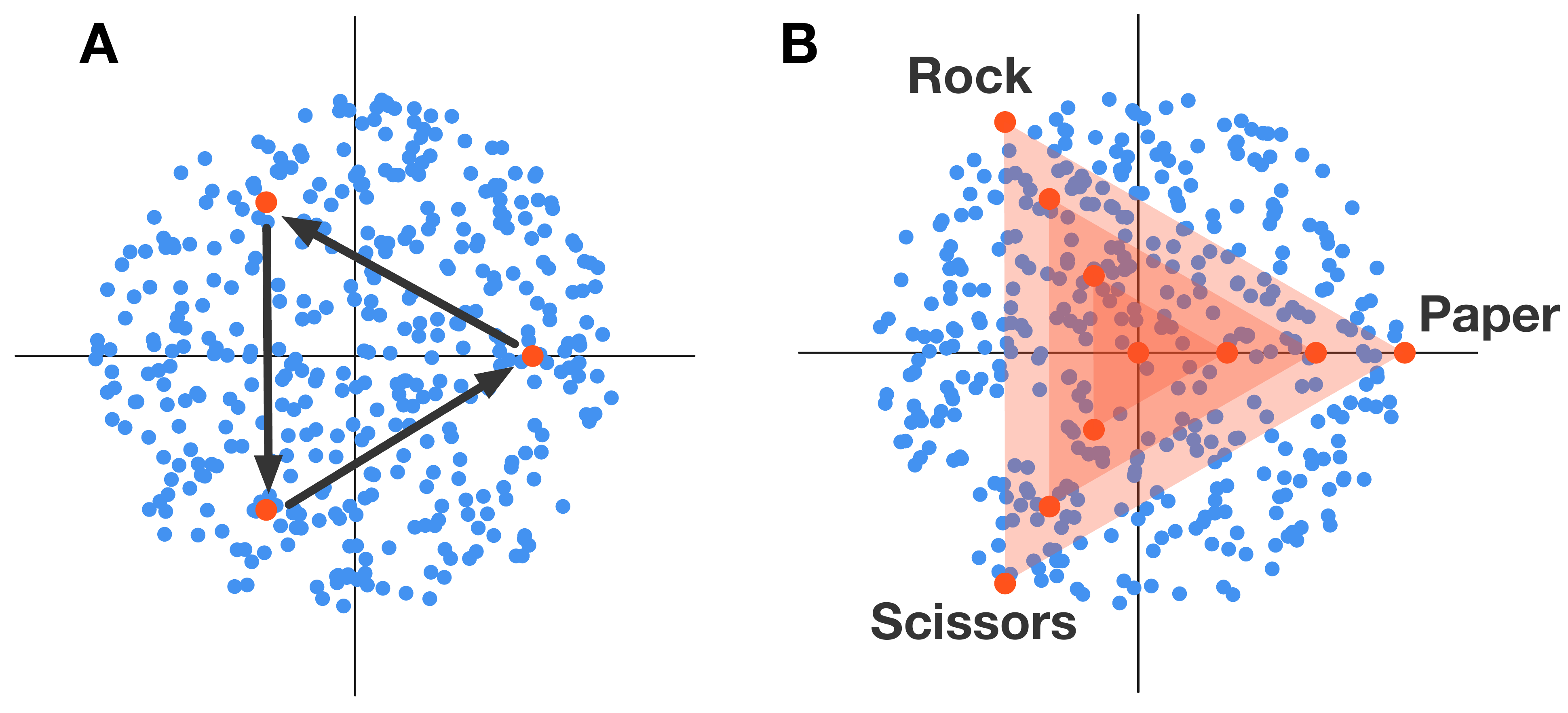}}
    \vspace{-5mm}
    \caption{\emph{The disc game.} \textbf{A:} A set of possible agents from the disc game is shown as blue dots. Three agents with non-transitive rock-paper-scissors relations are visualized in red. \textbf{B:} Three concentric gamescapes spanned by populations with rock-paper-scissor interactions of increasing strength.}
    \label{f:rps}
\end{figure}

\section{Algorithms}
\label{l_sect_alg}

We now turn attention to constructing objectives that when trained against, produce new, effective agents. We present two algorithms that construct a sequence of fruitful local objectives that, when solved, iteratively add up to transitive population-level progress. Importantly, these algorithms output populations, unlike self-play which outputs single agents.

Concretely, we present algorithms that expand the empirical gamescape in useful directions. Following \citet{lanctot:17}, we assume access to a subroutine, or \textbf{oracle}, that finds an approximate best response to any mixture $\sum_{i} p_i \phi_{i}(\wt_i)$ of objectives. The subroutine could be a gradient-based, reinforcement learning or evolutionary algorithm. The subroutine returns a vector in weight-space, in which existing agents can be shifted to create new agents. Any mixture constitutes a valid training objective. However, many mixtures do not grow the gamescape, because the vector could point towards redundant or weak agents.

\subsection{Response to the Nash ($\psro$)}
\label{l_sect_psro_nash}

Since the notion of `the best agent' -- one agent that beast all others -- does not necessarily exist in nontransitive games, a natural substitute is the mixture over the Nash equilibrium on the most recent population $\pop_t$. The policy space response to the Nash ($\psro$) iteratively generates new agents that are approximate best responses to the Nash mixture. If the game is transitive then $\psro$ degenerates to self-play. The algorithm is an extension of the double oracle algorithm \citep{mcmahan:03} to $\ffg$s, see also \citep{zinkevich:07, hansen:08}.

\begin{algorithm}
    \caption{Response to Nash ($\psro$)}\label{alg:psro}
\begin{algorithmic}
   \STATE {\bfseries input:} population $\pop_1$ of agents
    \FOR{$t=1, \ldots, T$}
    \STATE $\bp_t \leftarrow$ Nash on $\bA_{\pop_t}$
    \STATE $\vt_{t+1} \leftarrow$ $\oracle\big(\vt_t, \sum_{\wt_i\in\pop_t} \bp_t[i]\cdot \phi_{\wt_i}(\bullet)\big)$
    \STATE $\pop_{t+1} \leftarrow  \pop_t\cup\{\vt_{t+1}\}$
    \ENDFOR
    \STATE {\bfseries output:} $\pop_{T+1}$
\end{algorithmic}
\end{algorithm}

The following result shows that $\psro$ strictly enlarges the empirical gamescape:

\begin{prop}\label{prop:oracle_guarantee}
    If $\bp$ is a Nash equilibrium on $\bA_\pop$ and $\sum_i p_i\phi_{\wt_i}(\vt)>0$, then adding $\vt$ to $\pop$ strictly enlarges the empirical gamescape: $\cG_\pop \subsetneq \cG_{\pop\cup\{\vt\}}$.
\end{prop}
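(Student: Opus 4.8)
The plan is to peel off the trivial containment and put all the work into strictness. Write $\pop=\{\wt_1,\dots,\wt_n\}$ and $\bc:=\big(\phi(\wt_1,\vt),\dots,\phi(\wt_n,\vt)\big)^\intercal$, so that antisymmetry of $\phi$ gives the block form
\[
\bA_{\pop\cup\{\vt\}}=
\begin{pmatrix}
\bA_\pop & \bc\\[2pt]
-\bc^\intercal & 0
\end{pmatrix}.
\]
Any convex mixture of the rows of $\bA_\pop$ is also a convex mixture of the first $n$ rows of $\bA_{\pop\cup\{\vt\}}$ (put weight $0$ on $\vt$), and forgetting the last coordinate recovers it; hence $\cG_\pop\subseteq\cG_{\pop\cup\{\vt\}}$ under the obvious identification (equivalently, in the $\fgs$ picture, $\cG_\pop=\hull\{\phi_{\wt_i}(\bullet)\}\subseteq\hull\{\phi_{\wt_i}(\bullet),\phi_\vt(\bullet)\}$). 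So everything reduces to showing the inclusion is proper, i.e.\ that $\vt$ contributes a genuinely new point.

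Next I would reduce properness to a single non-membership statement. If adding $\vt$ did \emph{not} enlarge the gamescape, then the $\vt$-row of $\bA_{\pop\cup\{\vt\}}$ would already lie in the convex hull of the first $n$ rows; comparing the first $n$ coordinates forces
\[
-\bc^\intercal=\bq^\intercal\bA_\pop\qquad\Longleftrightarrow\qquad \bc=\bA_\pop\,\bq
\]
for some distribution $\bq$ on $\pop$ (using $\bA_\pop^\intercal=-\bA_\pop$). Equivalently, $\vt$'s payoff profile against $\pop$ lies in $\cG_\pop$. It therefore suffices to rule out $\bc=\bA_\pop\bq$ for every $\bq\in\Delta_n$.

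The engine of the proof is then one line, combining antisymmetry with the Nash characterization of Proposition~\ref{prop:nash}. Suppose $\bc=\bA_\pop\bq$ with $\bq\succeq\bZ$. Since $\phi_{\wt_i}(\vt)=\phi(\vt,\wt_i)=-\phi(\wt_i,\vt)=-c_i$, and since $\bp^\intercal\bA_\pop\succeq\bZ$ (Proposition~\ref{prop:nash}) while $\bq\succeq\bZ$,
\[
\sum_i p_i\,\phi_{\wt_i}(\vt)=-\bp^\intercal\bc=-\bp^\intercal\bA_\pop\,\bq\le 0,
\]
contradicting the hypothesis $\sum_i p_i\phi_{\wt_i}(\vt)>0$. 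Hence no such $\bq$ exists, the $\vt$-row is not a convex mixture of the old rows, and $\cG_\pop\subsetneq\cG_{\pop\cup\{\vt\}}$.

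I expect the only real friction to be bookkeeping rather than mathematics: making precise the sense in which $\cG_\pop$ ``sits inside'' $\cG_{\pop\cup\{\vt\}}$ when the two polytopes live in $\bR^n$ and $\bR^{n+1}$. This is handled cleanly either via the coordinate-forgetting projection $\bR^{n+1}\to\bR^n$, under which $\cG_{\pop\cup\{\vt\}}$ maps onto $\hull\big(\cG_\pop\cup\{-\bc^\intercal\}\big)$ and the task becomes $-\bc^\intercal\notin\cG_\pop$, or by working in the $\fgs$ description inside $\cC(W,\bR)$. The substantive step --- that the hypothesis pushes $\vt$'s payoff vector out of $\cG_\pop$ --- is exactly the computation above, and is short once one observes that $\bp^\intercal\bA_\pop\bq\ge 0$ for every distribution $\bq$ whenever $\bp$ is a Nash equilibrium of $\bA_\pop$.
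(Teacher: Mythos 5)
Your proof is correct and takes essentially the same route as the paper's: assume for contradiction that $\vt$'s payoff profile against $\pop$ is a convex mixture of rows of $\bA_\pop$, and observe that any such mixture ties or loses to the Nash, contradicting $\sum_i p_i\phi_{\wt_i}(\vt)>0$. You merely make explicit, via the block form of $\bA_{\pop\cup\{\vt\}}$ and the inequality $\bp^\intercal\bA_\pop\bq\ge 0$, the bookkeeping that the paper's one-line argument leaves implicit.
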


A failure mode of $\psro$ arises when the Nash equilibrium of the game is contained in the empirical gamescape. For example, in the disc game in figure~\ref{f:rps} the Nash equilibrium of the entire $\ffg$ is the agent at the origin $\wt=(0,0)$. If a population's gamescape contains $\wt=(0,0)$ -- which is true of any $\rock$-$\paper$-$\scissors$ subpopulation -- then $\psro$ will not expand the gamescape because there is no $\epsilon$-better response to $\wt=(0,0)$. The next section presents an algorithm that uses \emph{niching} to meaningfully grow the gamescape, even after finding the Nash equilibrium of the $\ffg$.

\textbf{Response to the uniform distribution ($\uresp$).}
A closely related algorithm is fictitious (self-)play \citep{brown:51, leslie:06, heinrich:15}. The algorithm finds an approximate best-response to the uniform distribution on agents in the current population: $\sum_{\wt_i\in\pop_t} \phi_{\wt_i}(\bullet)$. $\uresp$ has guarantees in matrix form games and performs well empirically (see below). However, we do not currently understand its effect on the gamescape.

\subsection{Response to the rectified Nash ($\tfold$)}
\label{l_sect_psro_rect_nash}

Response to the \emph{rectified} Nash ($\tfold$), introduces game-theoretic niches. Each effective agent -- that is each agent with support under the Nash equilibrium -- is trained against the Nash-weighted mixture of agents that it beats or ties. Intuitively, the idea is to encourage agents to `amplify their strengths and ignore their weaknesses'. 

A special case of $\tfold$ arises when there is a dominant agent in the population, that beats all other agents. The Nash equilibrium is then concentrated on the dominant agent, and $\tfold$ degenerates to training against the best agent in the population, which can be thought of as a form of self-play (assuming the best agent is the most recent).

\begin{algorithm}
    \caption{Response to rectified Nash ($\tfold$)}\label{alg:threefold}
\begin{algorithmic}
   \STATE {\bfseries input:} population $\pop_1$
   \FOR{$t=1,\ldots, T$}
        \STATE $\bp_t \leftarrow$ Nash on $\bA_{\pop_t}$
        \FOR{agent $\vt_t$ with positive mass in $\bp_t$}
            \STATE $\vt_{t+1} \leftarrow  \oracle\big(\vt_t, \sum_{\wt_i\in\pop_t} \bp_t[i]\cdot \floor*{\phi_{\wt_i}(\bullet)}_+
            \big)$ 
        \ENDFOR
        \STATE $\pop_{t+1}\leftarrow\pop_t\cup\{\vt_{t+1}:$ updated above$\}$
    \ENDFOR
    \STATE {\bfseries output:} $\pop_{T+1}$
\end{algorithmic}
\end{algorithm}

\begin{figure}[t]  
    {\center
    \includegraphics[width=.5\textwidth]{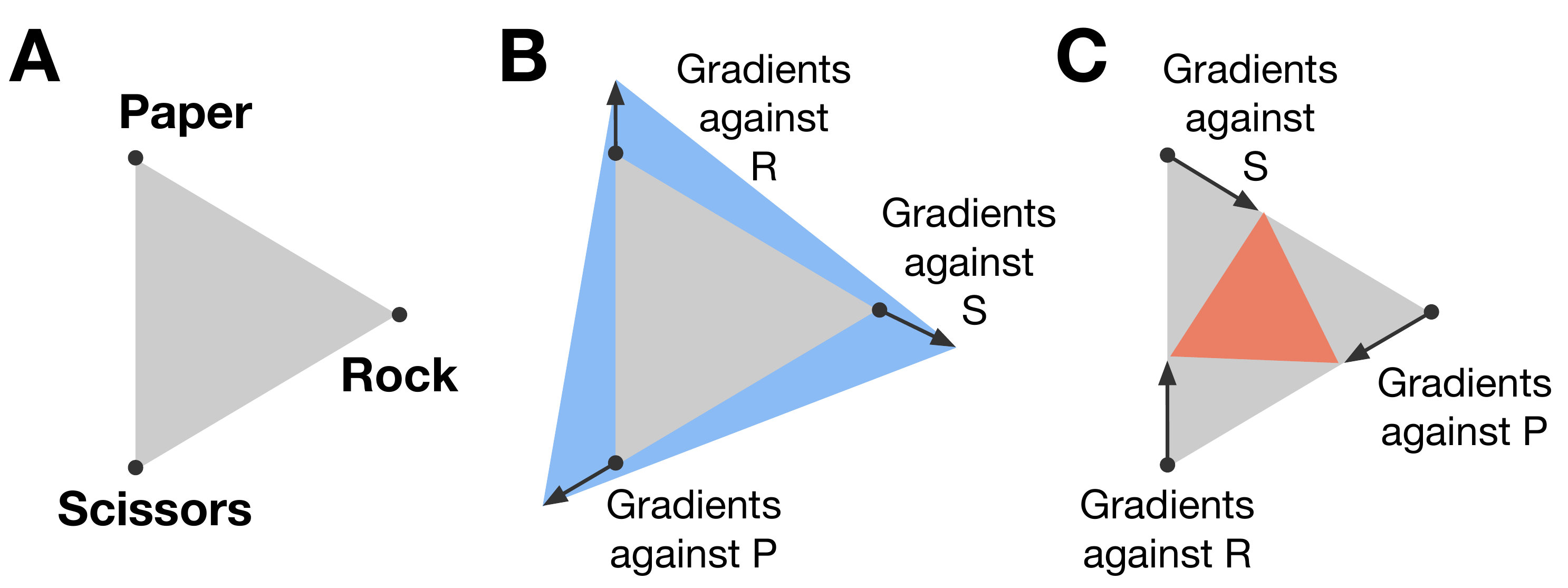}}
    \vspace{-5mm}
    \caption{
    \textbf{A:}
    Rock-paper-scissors.
    \textbf{B:} Gradient updates obtained from $\tfold$, amplifying strengths, grow gamescape (gray to blue).
    \textbf{C:} Gradients obtained by optimizing agents to reduces their losses shrink gamescape (gray to red).
    }
    \label{f:threefold}
\end{figure}

\begin{prop}\label{prop:threefold}
    The objective constructed by rectified Nash response is effective diversity, definition~\ref{def:diverse}.
\end{prop}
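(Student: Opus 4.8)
The plan is to write down the objective that $\tfold$ actually constructs, observe that it is the \emph{same} function for every effective agent (every agent with positive Nash mass), and then show by a one‑line relabeling of indices that the Nash‑weighted aggregate of this objective over the current population is exactly $\diverse(\pop)$. The statement ``the objective constructed by rectified Nash response is effective diversity'' should therefore be read as: $\diverse(\pop)$ is the Nash‑weighted sum of the per‑agent $\tfold$ objectives, so each oracle step of $\tfold$ amounts to ascent on $\diverse$.

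Concretely, let $\bp$ be the Nash equilibrium on $\bA_\pop$ used by the algorithm, and recall from the curry operator that $\phi_{\wt_i}(\vt)=\phi(\vt,\wt_i)$. The objective against which $\tfold$ updates any agent $\wt_j$ with $p_j>0$ is
\[
g(\vt)\;:=\;\sum_{\wt_i\in\pop}\bp[i]\cdot\floor*{\phi_{\wt_i}(\vt)}_+\;=\;\sum_{i=1}^{n} p_i\,\floor*{\phi(\vt,\wt_i)}_+,
\]
which does not depend on $j$ (``amplify your strengths, ignore your weaknesses''); note that $g(\wt_j)$ is precisely the $j$‑th coordinate of the vector $\floor*{\bA_\pop}_+\,\bp$. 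Averaging $g$ over the current agents under $\bp$,
\[
\sum_{j=1}^{n} p_j\, g(\wt_j)\;=\;\sum_{i,j=1}^{n} p_i p_j\,\floor*{\phi(\wt_j,\wt_i)}_+\;=\;\sum_{i,j=1}^{n} p_i p_j\,\floor*{\phi(\wt_i,\wt_j)}_+\;=\;\bp^\intercal\floor*{\bA_\pop}_+\bp\;=\;\diverse(\pop),
\]
where the middle step merely swaps the dummy indices $i\leftrightarrow j$ and the last step is definition~\ref{def:diverse}. Since $p_j=0$ kills the non‑effective terms, the left‑hand sum ranges only over the agents $\tfold$ actually updates. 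Each call $\vt_j'\leftarrow\oracle(\wt_j,g(\bullet))$ returns $\vt_j'$ with $g(\vt_j')>g(\wt_j)+\epsilon$, so — holding $\bp$ fixed — replacing $\wt_j$ by $\vt_j'$ strictly increases the aggregate $\sum_j p_j g(\cdot)$, i.e.\ strictly increases effective diversity. That is the sense in which the $\tfold$ objective ``is'' $\diverse$.

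For the gradient‑level reading one differentiates $g$ at the current agent, $\grad_\vt g(\vt)\big|_{\vt=\wt_j}=\sum_i p_i\,\indicator\{\phi(\wt_j,\wt_i)>0\}\,\grad_\vt\phi(\vt,\wt_i)\big|_{\vt=\wt_j}$, and this is, up to the positive factor $p_j$, exactly the ``strengths'' part of $\grad_{\wt_j}\diverse(\pop)$; the complementary ``weaknesses'' part — which, via antisymmetry of $\phi$, comes from the terms of $\diverse$ in which $\wt_j$ is the \emph{loser} — is discarded by construction (cf.\ figure~\ref{f:threefold}B vs.\ C). Nothing here is technically hard: the aggregate identity is a relabeling and the gradient claim is a single differentiation. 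The only point that genuinely needs care is stating precisely \emph{which} combination of the per‑agent objectives is being identified with $\diverse$ — namely the Nash‑weighted aggregate over effective agents — which is exactly what the first two paragraphs pin down.
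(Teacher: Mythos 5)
Your proposal is correct and follows essentially the same route as the paper: both identify $\diverse(\pop)$ with the Nash-weighted aggregate $\sum_j p_j\,g(\wt_j)$ of the per-agent rectified Nash response objectives $g(\vt)=\sum_i p_i\floor*{\phi_{\wt_i}(\vt)}_+$, via the same index relabeling. The additional remarks on the oracle step increasing the aggregate and the gradient-level reading are consistent elaborations but not a different argument.
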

Thus, $\tfold$ amplifies the positive coordinates, of the Nash-supported agents, in their rows of the evaluation matrix. A pathological mode of $\tfold$ is when there are many extremely local niches. That is, every agent admits a specific exploit that does not generalize to other agents. $\tfold$ will grow the gamescape by finding these exploits, generating a large population of highly specialized agents.

\textbf{Rectified Nash responses in the disc game} (example~\ref{eg:disc})\textbf{.}
The disc game embeds many subpopulations with rock-paper-scissor dynamics. As the polytope they span expands outwards, the interactions go from noisy to deterministic. The disc game is differentiable, so we can use gradient-based learning for the oracle in $\tfold$.  Figure~\ref{f:threefold}B depicts the gradients resulting from training each of rock, paper and scissors against the agent it beats. Since the gradients point \emph{outside} the polytope, training against the rectified Nash mixtures expands the gamescape.

\textbf{Why ignore weaknesses?}
A natural modification of $\tfold$ is to train effective agents against effective agents that they \emph{lose} to. In other words, to force agents to improve their weak points whilst taking their strengths for granted. Figure~\ref{f:threefold}C shows the gradients that would be applied to each of rock, paper and scissors under this algorithm. They point \emph{inwards}, contracting the gamescape. Training rock against paper would make it more like scissors; similarly training paper against scissors would make it more like rock and so on. Perhaps counter-intuitively, building objectives using the weak points of agents does not encourage diverse niches.

\section{Experiments}
\label{s:exp}

We investigated the performance of the proposed algorithms in two highly nontransitive resource allocation games.

\begin{figure}[t]  
    {\center
    \includegraphics[width=.5\textwidth]{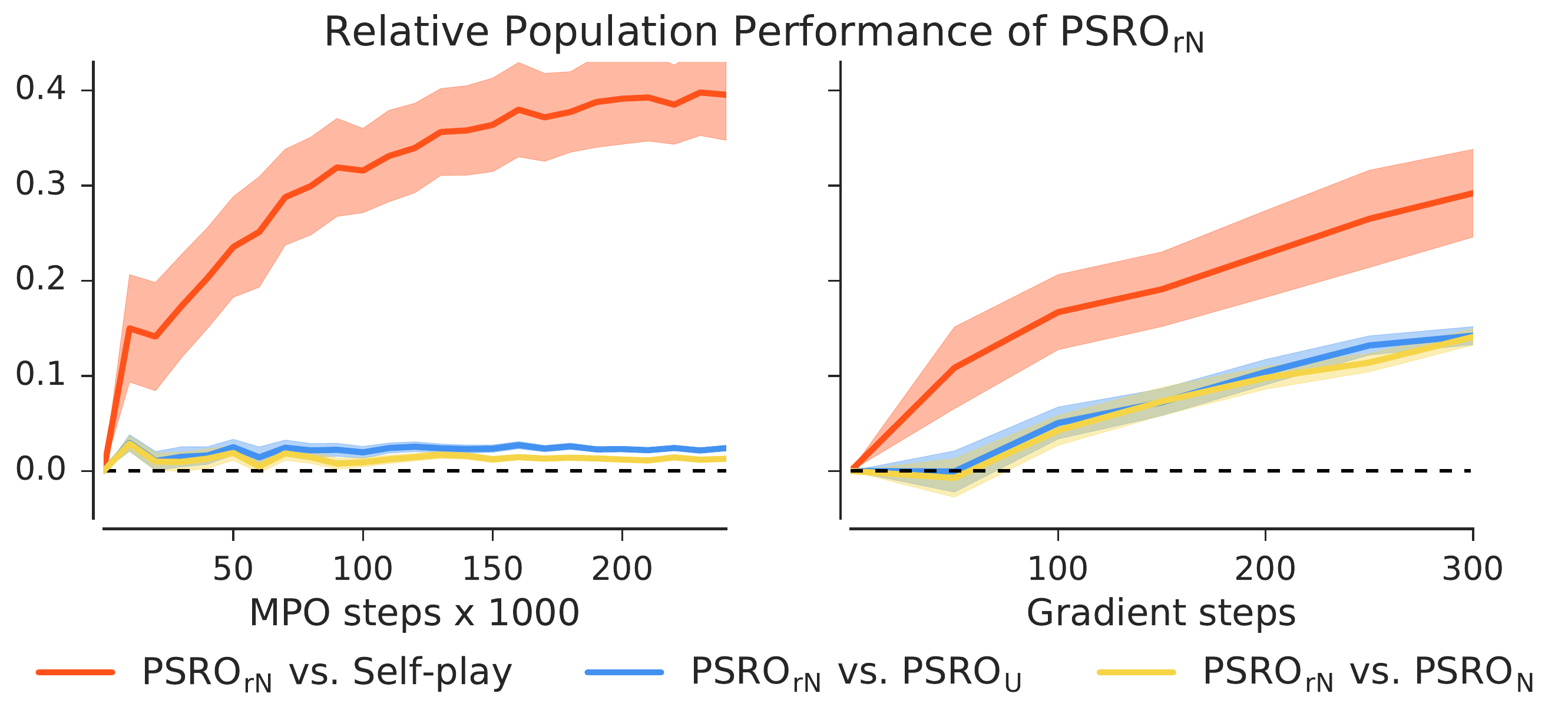}}
    \vspace{-5mm}
    \caption{Performance of $\tfold$ relative to self-play, $\uresp$ and $\psro$ on Blotto (left) and Differentiable Lotto (right). In all cases, the relative performance of $\tfold$ is positive, and therefore outperforms the other algorithms.}
    \label{f:blotto}
\end{figure}

\begin{figure*}[h!]
    \includegraphics[width=0.70\textwidth]{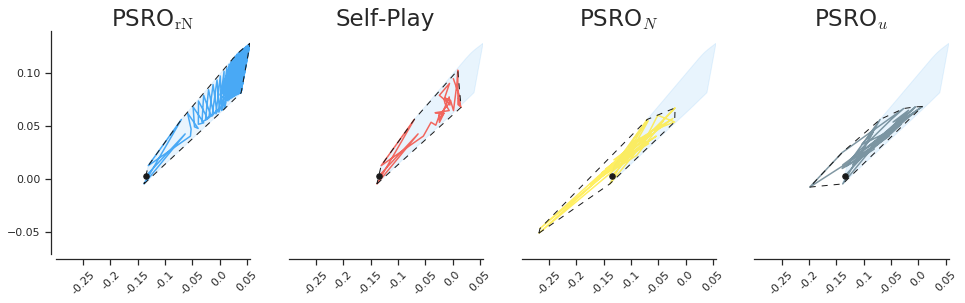}
    \includegraphics[width=0.29\textwidth]{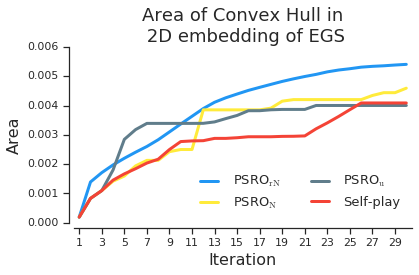}
    \caption{Visualizations of training progress in Differentiable Lotto experiment. \textbf{Left:} Comparison of trajectories taken by each algorithm in the 2-dim Schur embedding of the $\egs$; a black dot represents first agent found by the algorithm and a dashed line represents the convex full. Shaded blue region shows area of the convex hull of $\tfold$. Notice the $\tfold$ consistent expansion of the convex hull through ladder-like movements. See Figure~\ref{f:schurs} for an extended version. \textbf{Right:} Area of convex hull spanned by populations over time. Note that only $\tfold$ consistently increases the convex hull in all iterations.}
    \label{f:gs_through_time}
\end{figure*}

\textbf{Colonel Blotto} is a resource allocation game that is often used as a model for electoral competition. Each of two players has a budget of $c$ coins which they simultaneously distribute over a fixed number of areas.
Area $a_i$ is won by the player with more coins on $a_i$. The player that wins the most areas wins the game. Since Blotto is not differentiable we use maximum a posteriory policy optimization (MPO)
\citep{abdolmaleki:18} as best response oracle.
MPO is an inference-based policy optimization algorithm; many other reinforcement learning algorithms could be used. 

\textbf{Differentiable Lotto} is inspired by continuous Lotto \citep{hart:08}. The game is defined over a fixed set $\cC$ of $c$ `customers', each being a point in $\bR^2$. An agent $(\bp,\vt) = \big\{(p_1,\vt_1),\ldots, (p_k,\vt_k)\big\}$ distributes one unit of mass over $k$ servers, where each server is a point $\vt_i\in\bR^2$. 
Roughly, given two agents  $(\bp,\vt)$ and $(\bq,\wt)$, customers are softly assigned to the nearest servers, determining the agents' payoffs. More formally, 
the payoff is
\begin{equation}
    \phi\big((\bp,\vt),(\bq,\wt)\big) := \sum_{i,j=1}^{c,k} \big(p_jv_{ij} - q_jw_{ij}\big),
\end{equation}
where the scalars $v_{ij}$ and $w_{ij}$ depend on the distance between customer $i$ and the servers:
\begin{equation}
    (v_{i1},\ldots, w_{ik}) := \text{softmax}(-\|\bc_i - \vt_1\|^2,\ldots,-\|\bc_i-\wt_k\|^2).
\end{equation}
The \emph{width} of a cloud of points is the expected distance from the barycenter. We impose agents to have width equal one. We use gradient ascent as our oracle.

\textbf{Experimental setups.}
The experiments examine the performance of self-play, $\uresp$, $\psro$, and $\tfold$. We investigate performance under a fixed computational budget. Specifically, we track queries made to the oracle, following the standard model of computational cost in convex optimization \citep{nemirovski:83}. To compare two algorithms, we report the relative population performance (definition~\ref{def:perf}), of the populations they output. Computing evaluation matrices is expensive, ${\mathcal O}(n^2)$, for large populations. This cost is not included in our computational model since populations are relatively small. The relative cost of evaluations and queries to the oracle depends on the game.

In Blotto, we investigate performance for $a=3$ areas and $c=10$ coins over $k=1000$ games. An agent outputs a vector in $\bR^3$ which is passed to a softmax, $\times10$ and discretized to obtain three integers summing to 10. Differentiable Lotto experiments are from $k=500$ games with $c=9$ customers chosen uniformly at random in the square $[-1, 1]^2$. 

\textbf{Results.}
Fig~\ref{f:blotto} shows the relative population performance, definition~\ref{def:perf}, between $\tfold$ and each of $\psro$, $\uresp$ and self-play: the more positive the number is, the more $\tfold$ outperforms the alternative method. We find that $\tfold$ outperforms the other approaches across a wide range of allowed compute budgets. $\uresp$ and $\psro$ perform comparably, and self-play performs the worst.
Self-play, algorithm~\ref{alg:selfplay}, outputs a single agent, so the above comparison only considers the final agent. If we upgrade self-play to a population algorithm (by tracking all agents produced over), then it still performs the worst in differentiable Lotto, but by a smaller margin. In Blotto, suprisingly, it slightly outperforms $\psro$ and $\uresp$.

Figure~\ref{f:gs_through_time} shows how gamescapes develop during training. From the left panel, we see that $\tfold$ grows the polytope in a more uniform manner than the other algorithms. The right panel shows the area of the empirical gamescapes generated by the algorithms (the areas of the convex hulls). All algorithms increase the area, but $\tfold$ is the only method that increases the area at every iteration.

\section{Conclusion}

We have proposed a framework for open-ended learning in two-player symmetric zero-sum games -- where strategies are agents, with a differentiable parametrization. We propose the goal of learning should be \textbf{(i)} to discover the underlying strategic components that constitute the game and \textbf{(ii)} to master each of them. We formalized these ideas using gamescapes, which geometrically represent the latent objectives in games, and provided tools to analyze them. Finally, we proposed and empirically validated a new algorithm, $\tfold$, for uncovering strategic diversity within functional form games.

The algorithms discussed here are simple and generic, providing the foundations for methods that unify modern gradient and reinforcement-based learning with the adaptive objectives derived from game-theoretic considerations. Future work lies in expanding this understanding and applying it to develop practical  algorithms for more complex games.

{

}

\setcounter{section}{0}
\vspace{5mm}
\renewcommand{\thesection}{\Alph{section}}
\vspace{5mm}
\noindent
{\textsf{\textbf{\Large{APPENDIX}}}}

\section{Behaviorism}
\label{s:behave}

The paper adopts a strict form of behaviorism: an agent is what it does. More precisely, an agent \emph{is all that it can do}. Agents are operationally characterized by all the ways they can interact with all other agents. The functional gamescape captures all possible interactions; the empirical gamescape captures all observed interactions.

Here, we restrict to the single facet of pairwise interactions reported by the function $\phi$. One could always delve deeper in specific instances. For example, instead of just reporting win/loss probabilities, one could also report summary statistics regarding which units are built or which pieces are taken.

\section{Comparison with GANs and co-evolution}
\label{s:coev}

Generative adversarial networks are two-player zero-sum games played by neural networks \citep{goodfellow:14}. The main difference between GANs and $\ffg$'s is that GANs are not symmetric. The discriminator and generator play qualitatively different roles -- it would be meaningless to plug discriminator weights into the generator or vice versa. In short, the two losses that comprise a GAN have two different semantics. 

There is a substantial literature on co-evolution, which has substantial overlap with our setting \citep{hillis:90, rosin:97, nolfi:98, ficici:00, ficici:01, dejong:01, monroy:06, schmidt:08, popovici:12}. As with GANs, co-evolution often involves \emph{asymmetric} interactions -- such as setter-solver, predators and prey, or organisms and parasites. As a result, the semantics are much richer, but also much more difficult to analyze, and therefore beyond the scope of this paper.


\begin{figure*}[t]
    {\center
    \includegraphics[width=.95\textwidth]{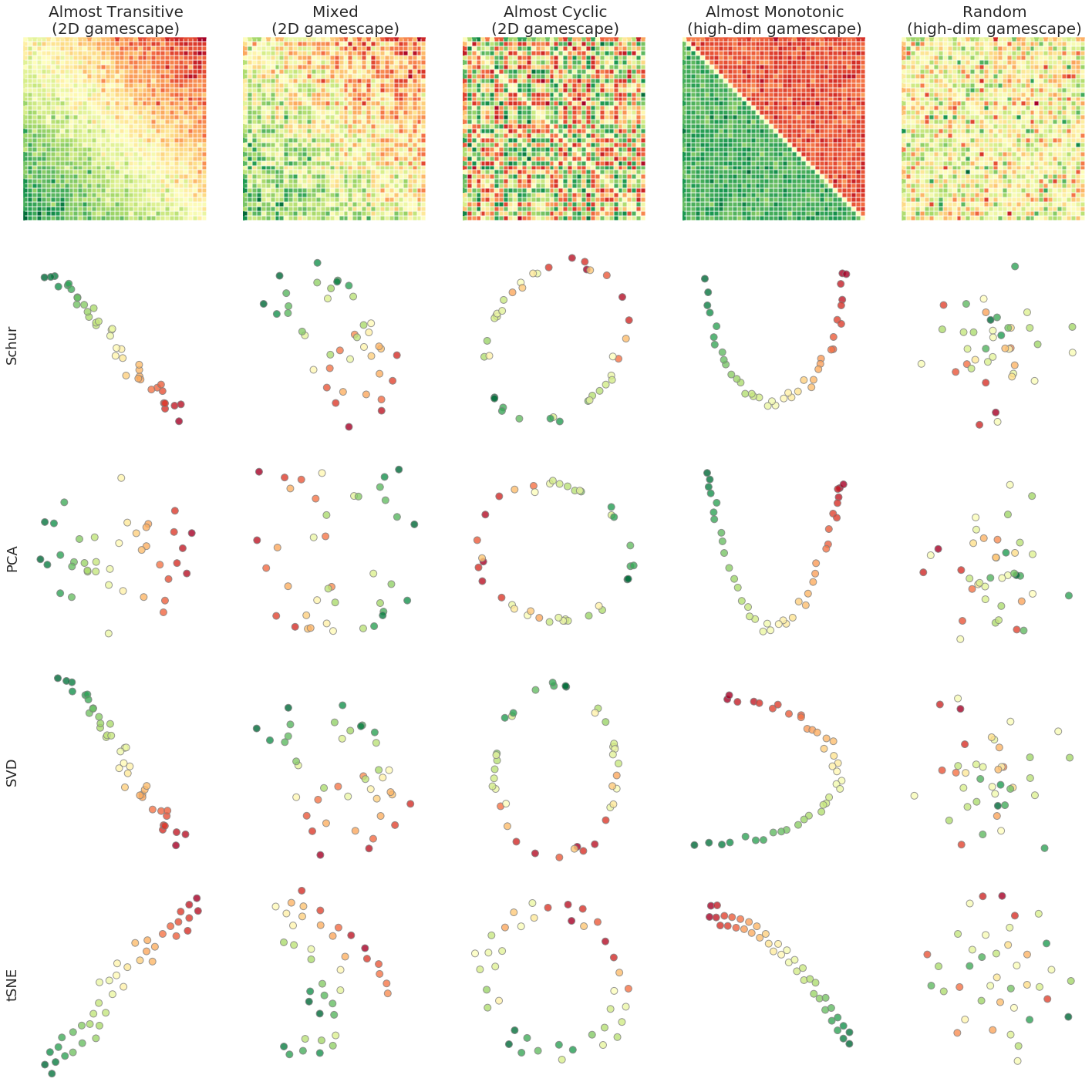}}
    \vspace{-5mm}
    \caption{\emph{Low-dim gamescapes of various basic game structures.}
    Extended version of Figure~\ref{f:two_leaderboards}.
    \textbf{Top row:}
    Evaluation matrices of populations of 40 agents each; colors vary from red to green as $\phi$ ranges over $[-1,1]$. 
    \textbf{Bottom row:}
    2D embedding obtained by using first 2 dimensions of various embedding techniques; Color corresponds to average payoff of an agent against entire population;
    $\egs$ of the transitive game is a line;
    $\egs$ of the cyclic game is two-dim near-circular polytope given by convex hull of points.
    }
    \label{f:embeddings}
\end{figure*}

\begin{figure*}[t]
    {\center
    \includegraphics[width=.85\textwidth]{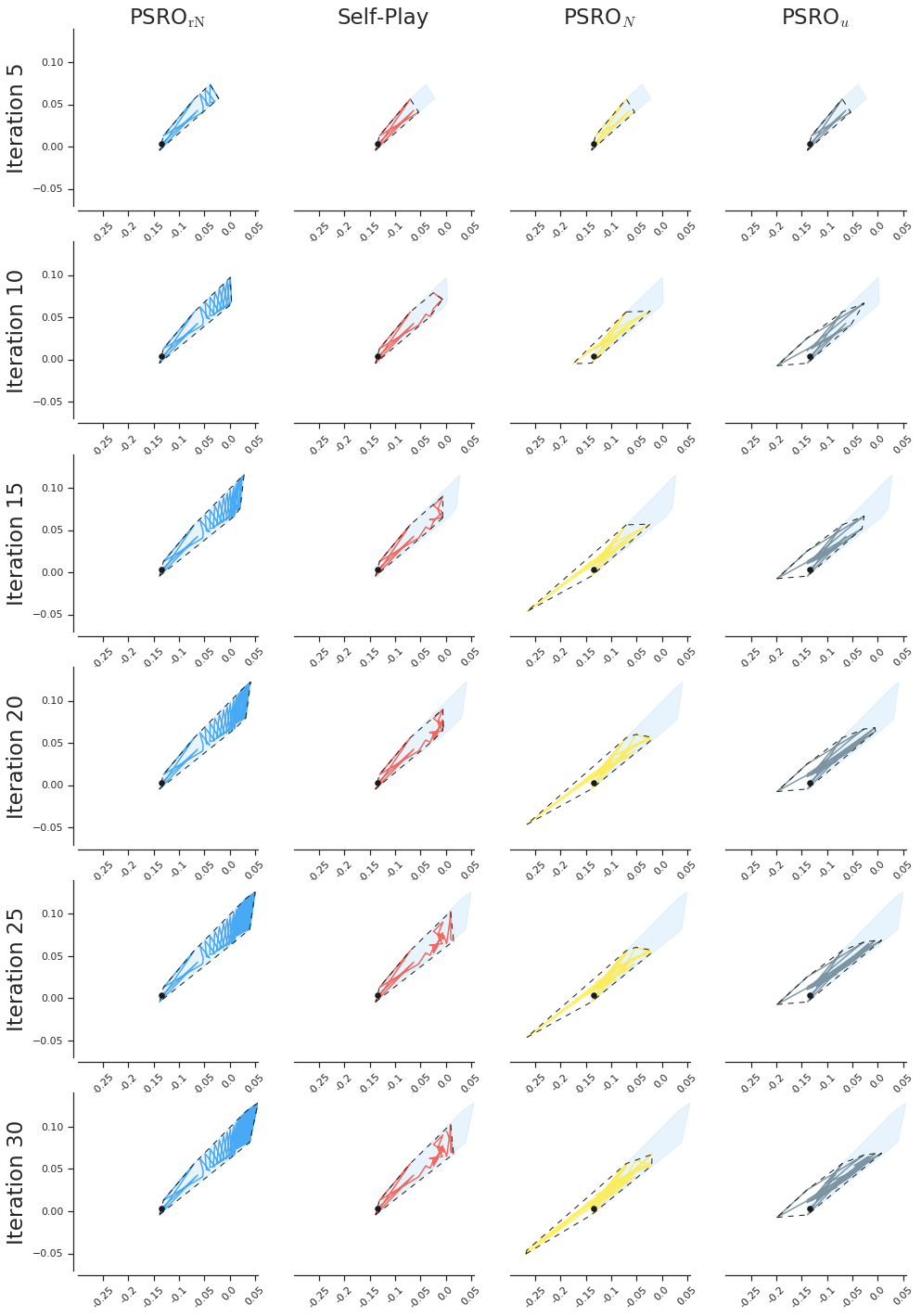}}
    \vspace{-5mm}
    \caption{ Extended version of Figure~\ref{f:gs_through_time}.
    Trajectories in 2D Schur embedding space while training on Differentiable Blotto game. Black dot represents initial agent, and dashed lines correspond to a convex hull of the population. Shaded blue region represents area of the convex hull of the $\tfold$ population at a given iteration.}
    \label{f:schurs}
\end{figure*}

\section{Convex geometry}
\label{s:convex}

A set $S$ contained in some vector space is \textbf{convex} if the lines
\begin{equation}
	L_{\vt,\wt} = \big\{ \alpha\cdot \vt + (1-\alpha)\cdot\wt \,:\, 0\leq\alpha\leq1 \big\}
\end{equation}
are subsets of $S$ for all $\vt,\wt\in S$.

\begin{defn}
    The \textbf{convex hull} of a set $A$ is the intersection of all convex sets containing $A$.
    Alternatively, the convex hull of $A\subset \bR^d$ is 
    \begin{equation}
        \hull(A) := \left\{\sum_{i=1}^n \alpha_i\cdot \wt_i : \balpha\succeq 0, \balpha^\intercal\bO=1, \wt_i\in A\right\}.
    \end{equation}
\end{defn}

The convex hull of a set is necessarily convex. 

\begin{thm}[Krein-Milman]
    An \textbf{extreme point} of a convex set $S$ is a point in $S$ which does not lie in any open line segment joining two points of $S$. If $S$ is convex and compact in a normed space, then $S$ is the closed convex hull of its extreme points.
\end{thm}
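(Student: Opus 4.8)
The plan is to establish the two inclusions of $S = \overline{\hull(\ext S)}$ separately. The inclusion $\overline{\hull(\ext S)} \subseteq S$ is immediate: $S$ is convex, and compact hence closed, so it contains the closed convex hull of any of its subsets, in particular of $\ext S$. All the work is in $S \subseteq \hull(\ext S)$.

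Since every gamescape in the paper lives in some $\bR^d$, I would first give the finite-dimensional argument, which avoids separation machinery. Induct on the dimension $d$ of the affine hull of $S$. If $d = 0$ then $S$ is a single point and equals $\ext S$. For the inductive step fix $x \in S$; if $x$ is extreme we are done, so assume $x$ lies on an open segment of $S$. Extending a line through $x$ within the affine hull of $S$ and using compactness, write $x = \lambda y + (1-\lambda)z$ with $0 < \lambda < 1$ and $y,z$ on the relative boundary of $S$. Each of $y,z$ lies on a supporting hyperplane $H$, hence in the face $S \cap H$, a compact convex set whose affine hull has dimension at most $d-1$. By the inductive hypothesis $y$ and $z$ are convex combinations of extreme points of $S\cap H$, and the crucial observation is that an extreme point of the face $S \cap H$ is automatically an extreme point of $S$. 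Substituting back expresses $x$ as a convex combination of extreme points of $S$. Ranging over all $x$ gives $S \subseteq \hull(\ext S)$; combined with the trivial reverse inclusion and the fact that $S$ is closed, this already yields $S = \overline{\hull(\ext S)}$, no separate compactness of $\ext S$ being needed.

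For the general normed-space statement I would instead run the classical Krein--Milman argument, built on the Hahn--Banach separation theorem and Zorn's lemma. First, any nonempty compact convex $K$ has an extreme point: order its nonempty closed \emph{extremal subsets} (faces) by reverse inclusion; every chain has its intersection as a nonempty upper bound by compactness; a Zorn-minimal face must be a singleton, since otherwise a continuous functional separating two of its points would carve out a strictly smaller face. Then, if some $x_0 \in S$ lay outside the closed convex set $C := \overline{\hull(\ext S)}$, a continuous functional $f$ separating $x_0$ from $C$ would attain $\max_S f$ on a compact convex face $F_0 = \{y \in S : f(y) = \max_S f\}$; an extreme point of $F_0$ exists by the first part and is extreme in $S$, hence lies in $C$, contradicting $f(x_0) > \sup_C f$.

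The main obstacle in both approaches is the same small lemma: extreme points of a face propagate to extreme points of the ambient set, which is exactly what lets the induction (resp. the Zorn argument) transfer conclusions about $S \cap H$ (resp. $F_0$) back to $S$. In the general case one additionally needs that the separating functional attains its supremum on the compact set $S$ and that the maximizing set is an extremal subset of $S$ --- these are precisely the places where compactness and the definition of extremality do the work; the remainder is routine bookkeeping with convex combinations.
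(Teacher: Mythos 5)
Your argument is correct, but note that the paper does not prove this statement at all: Krein--Milman appears in Appendix~C only as a cited classical background fact used to justify working with convex hulls and their extreme points, so there is no in-paper proof to compare against. What you have written is the standard two-pronged treatment. The finite-dimensional induction on $\dim \operatorname{aff}(S)$ is really Minkowski's theorem and delivers the stronger conclusion $S = \hull(\ext S)$ with no closure needed, which is the version actually relevant to the paper's empirical gamescapes in $\bR^n$; the Zorn/Hahn--Banach argument covers the normed-space statement as literally written. Both halves hinge, as you correctly isolate, on the lemma that an extreme point of a face $S \cap H$ (equivalently of the maximizing set $F_0$ of a supporting functional) is extreme in $S$, which follows by evaluating the supporting functional on any convex decomposition. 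The only steps you gloss over are routine: existence of the supporting hyperplane at a relative boundary point, attainment of $\max_S f$ by compactness, and the fact that the maximal segment through a non-extreme point has endpoints on the relative boundary. None of these is a gap.
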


\paragraph{Row versus column mixtures.} 
In the main text we define the empirical gamescape $\cG_\bA$ to be all convex mixtures of \emph{rows} of an antisymmetric matrix $\bA$. It follows by antisymmetry of $\bA$ that working with convex mixtures of columns obtains the same polytope up to sign. 

The same holds for functional gamescape, where we chose to work with convex combinations of functions of the form $\phi_\wt(\bullet) = \phi(\bullet, \wt)$, but could have equivalently (up to sign) worked with convex combinations of functions of the form $\phi(\wt, \bullet)$.

\section{Proof of theorem~\ref{thm:hodge}}
\label{s:hodge}

We generalize some results from combinatorial Hodge theory \citep{jiang:11, candogan:11, reval:18} to the setting where flows are antisymmetric \emph{functions} instead of antisymmetric \emph{matrices}. 

\subsection{Combinatorial Hodge theory}

Combinatorial Hodge theory is a combinatorial analog of differential geometry, see \citet{jiang:11, candogan:11, baker:18}. Consider a fully connected graph with vertex set $[n]=\{1,\ldots,n\}$. Assign a \textbf{flow} $\bA_{ij}\in\bR$ to each edge of the graph. The flow in the opposite direction $ji$ is $\bA_{ji} = -\bA_{ij}$, so flows are just $(n\times n)$ antisymmetric matrices. The flow on a graph is analogous to a vector field on a manifold. 

The \textbf{combinatorial gradient} of an $n$-vector $\br$ is the flow: $\ggrad(\br) := \br\bO^\intercal - \bO\br^\intercal$. Flow $\bA$ is a \textbf{gradient flow} if $\bA=\ggrad(\br)$ for some $\br$, or equivalently if $\bA_{ij} = \br_i - \br_j$ for all $i,j$.
The \textbf{combinatorial divergence} of a flow is the $n$-vector $\dv(\bA) := \frac{1}{n}\bA\cdot \bO$. The divergence measures the contribution to the flow of each vertex, considered as a source. The \textbf{curl} of a flow is the three-tensor $\curl(\bA)_{ijk} = \bA_{ij} + \bA_{jk} - \bA_{ik}$. 

\subsection{Functional Hodge theory}

We now extend the basic tools of combinatorial Hodge theory to the functional setting.

Let $W$ be a compact set with a probability measure. Given function $\phi:W\times W\rightarrow \bR$, let $\phi(\vt,\wt)$ prescribe the \textbf{flow} from $\vt$ to $\wt$. The opposite flow from $\wt$ to $\vt$ is $\phi(\wt,\vt) = -\phi(\vt,\wt)$, so $\phi:W\times W\rightarrow \bR$ is antisymmetric. The combinatorial setting above arises as the special case where the compact set $W$ is finite.

The \textbf{combinatorial gradient}\footnote{Our use of `gradient' and `divergence' does not coincide with the usual usage in multivariate calculus. This is forced on us by the terminological discrepancy between combinatorial Hodge theory and multivariate calculus. No confusion should result; we use $\ggrad(\bullet)$ for combinatorial gradients and $\grad$ for calculus gradients.} of a function $f:W\rightarrow \bR$ is the flow $\ggrad(f):W\times W\rightarrow \bR$ given by
\begin{equation}
    \ggrad(f)(\vt,\wt) := f(\vt) - f(\wt).
\end{equation}
The \textbf{divergence} of a flow is the function $\dv(\phi):W\rightarrow \bR$ given by
\begin{equation}
    \dv(\phi)(\vt) := \int_{W} \phi(\vt, \wt)\cdot d\wt.
\end{equation} 
The \textbf{curl} of flow $\phi$ is $\curl(\phi):W\times W\times W\rightarrow \bR$ given by
\begin{equation}
    \curl(\phi)(\bu, \vt, \wt) 
    := \phi(\bu,\vt) + \phi(\vt,\wt) - \phi(\bu,\wt).
\end{equation}

The following  proposition lays the groundwork for the Hodge decomposition, proved in the next section. 

\begin{lem}\label{lem:hodge}
	Let $\cF = \{f:W\rightarrow \bR\,:\,\int_Wf(\wt)d\wt = 0\}$ be the vector space of functions with zero expectation. Then
    \textbf{(i)} $\dv\circ \ggrad(f) = f$; 
    \textbf{(ii}) $\dv(\phi)\in\cF$ for all flows $\phi$; and
    \textbf{(iii)} $\curl\circ \ggrad(f) \equiv0$ for all $f\in\cF$. 
\end{lem}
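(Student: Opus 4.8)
The plan is to verify each of the three claims by direct computation from the definitions, since all three are essentially bookkeeping identities with the probability measure on $W$ normalized to total mass $1$.

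For \textbf{(i)}, I would write $\dv\circ\ggrad(f)(\vt) = \int_W \ggrad(f)(\vt,\wt)\,d\wt = \int_W \big(f(\vt) - f(\wt)\big)\,d\wt = f(\vt)\int_W d\wt - \int_W f(\wt)\,d\wt$. The first term is $f(\vt)$ because the measure has total mass one, and the second term vanishes because $f\in\cF$ has zero expectation. Hence $\dv\circ\ggrad(f) = f$. For \textbf{(ii)}, I would compute $\int_W \dv(\phi)(\vt)\,d\vt = \int_W\int_W \phi(\vt,\wt)\,d\wt\,d\vt$; by Fubini (valid since $\phi$ is integrable on the compact product space) this double integral is invariant under swapping $\vt$ and $\wt$, but antisymmetry $\phi(\wt,\vt) = -\phi(\vt,\wt)$ forces it to equal its own negative, so it is zero. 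Therefore $\dv(\phi)\in\cF$. For \textbf{(iii)}, starting from $\curl\circ\ggrad(f)(\bu,\vt,\wt) = \ggrad(f)(\bu,\vt) + \ggrad(f)(\vt,\wt) - \ggrad(f)(\bu,\wt) = \big(f(\bu)-f(\vt)\big) + \big(f(\vt)-f(\wt)\big) - \big(f(\bu)-f(\wt)\big)$, the terms telescope to $0$ pointwise; note this uses only the algebraic form of $\ggrad$, so the hypothesis $f\in\cF$ is not actually needed for (iii).

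The only genuine subtlety — and the one I would be most careful about — is the integrability bookkeeping: we need $\phi$ integrable on $W\times W$ so that Fubini applies in part (ii), and we need $f$ integrable so the expressions in (i) make sense. Since the excerpt already posits that $W$ is compact with a probability measure and that we work in the vector space of integrable antisymmetric functions, these hypotheses are available, and I would simply invoke them. I would also state explicitly that ``probability measure'' means $\int_W d\wt = 1$, since that normalization is what makes the constant term in (i) collapse to exactly $f(\vt)$ rather than $f(\vt)\cdot\mathrm{vol}(W)$.

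In short, (i) and (iii) are one-line algebraic manipulations once the zero-expectation and unit-mass conditions are used, and (ii) is a one-line Fubini-plus-antisymmetry argument. There is no real obstacle; the writeup is essentially a matter of organizing these three short computations cleanly.
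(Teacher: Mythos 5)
Your proof is correct and follows essentially the same route as the paper's: the same unit-mass plus zero-expectation computation for (i), antisymmetry of the double integral for (ii), and pointwise telescoping for (iii). Your added remarks on Fubini and on (iii) not actually needing $f\in\cF$ are accurate but do not change the argument.
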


\begin{proof}
    \textbf{(i)}
    Observe that 
    \begin{align}
        \dv\circ \ggrad(f) & = \int_W \Big(f(\vt)-f(\wt)\Big)\cdot d\wt\\
        & = f(\vt)\int_W d\wt - \int_W f(\wt)d\wt\\
        & = f(\vt) - 0
    \end{align}
    where $\int_W d\wt=1$ because $d\wt$ is a probability measure and $\int_W f(\wt)d\wt=0$ by assumption.
    
    \textbf{(ii)}
    Direct computation  obtains that 
    \begin{align}
        \int_W \dv(\phi)(\vt)d\vt 
        & = \int_{W\times W} \phi(\vt,\wt)\cdot d\vt\cdot d\wt = 0
    \end{align}
    by antisymmetry of $\phi$.
    
    \textbf{(iii)}
    Direct computation shows that $\curl\circ \ggrad(f)$ is
    \begin{gather}
        \ggrad(f)(\bu,\vt) + \ggrad(f)(\vt,\wt) - \ggrad(f)(\bu,\wt)\\
        = f(\bu) - f(\vt) + f(\vt) - f(\wt) -f(\bu) + f(\wt) \equiv0
    \end{gather}
    as required.
\end{proof}

A basic result in combinatorial Hodge theory is the Hodge decomposition, which decomposes the space of flows into gradient-flows and curl-free flows \citep{jiang:11, reval:18}. The result is analogous to the Helmholtz decomposition in electrodynamics \citep{dgm:18}. Here, we prove a variant of the Hodge decomposition in the functional setting. 

\begin{thm*}[Hodge decomposition]
	The vector space of games admits an orthogonal decomposition
    \begin{equation}
        \{\text{transitive games}\} \oplus \{\text{cyclic games}\}
        = \im (\ggrad) \oplus \ker(\dv)
    \end{equation}
    with respect to the inner product on games $\langle\phi, \psi\rangle := \int_W \phi(\vt, \wt)\psi(\vt',\wt')\cdot d\vt d\wt d\vt' d\wt'$.
\end{thm*}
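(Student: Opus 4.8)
The plan is to exhibit the decomposition through the operator $P := \ggrad\circ\dv$ and to show it is an orthogonal idempotent whose image is exactly the transitive games and whose kernel is exactly the cyclic games; everything then drops out of Lemma~\ref{lem:hodge} together with Fubini. (I read $\langle\cdot,\cdot\rangle$ as the $L^2$ inner product $\langle\phi,\psi\rangle=\int_{W\times W}\phi(\vt,\wt)\psi(\vt,\wt)\,d\vt\,d\wt$; note the displayed form in the statement, being a product of two integrals of antisymmetric functions, is identically zero, so this is surely the intended pairing.)

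First I would match the two summands to the operator-theoretic objects. A transitive game is by definition $\phi(\vt,\wt)=f(\vt)-f(\wt)=\ggrad(f)(\vt,\wt)$; replacing $f$ by $f-\int_W f$ leaves $\ggrad(f)$ unchanged, so we may take $f\in\cF$, whence $\{\text{transitive games}\}=\ggrad(\cF)=\im(\ggrad)$. Dually, the defining property of a cyclic game, $\int_W\phi(\vt,\wt)\,d\wt=0$ for all $\vt$, is precisely $\dv(\phi)\equiv 0$, so $\{\text{cyclic games}\}=\ker(\dv)$. Here one records that integrability of $\phi$ on $W\times W$ and Fubini make $\dv(\phi)$ a well-defined integrable function, and that $\ggrad(f)$ is integrable whenever $f$ is.

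Next, existence of the decomposition. Given antisymmetric $\phi$, set $f:=\dv(\phi)$, which lies in $\cF$ by Lemma~\ref{lem:hodge}(ii), and put $\psi:=\phi-\ggrad(f)$. By Lemma~\ref{lem:hodge}(i), $\dv(\ggrad(f))=f=\dv(\phi)$, so $\dv(\psi)=0$; thus $\phi=\ggrad(f)+\psi$ with the first term transitive and $\psi$ cyclic. Equivalently, $P=\ggrad\circ\dv$ satisfies $P^2=P$ (since $\dv(\phi)\in\cF$ and $\dv\circ\ggrad$ is the identity on $\cF$), $\im(P)=\im(\ggrad)$, and $\ker(P)=\{\phi:\dv(\phi)\text{ is constant}\}=\ker(\dv)$ because $\dv(\phi)\in\cF$ forces that constant to be $0$. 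For orthogonality, take transitive $\ggrad(f)$ with $f\in\cF$ and cyclic $\psi$; then
\[
\langle\ggrad(f),\psi\rangle=\int_{W\times W}\big(f(\vt)-f(\wt)\big)\psi(\vt,\wt)\,d\vt\,d\wt=\int_W f(\vt)\,\dv(\psi)(\vt)\,d\vt-\int_W f(\wt)\Big(\int_W\psi(\vt,\wt)\,d\vt\Big)d\wt,
\]
and both terms vanish: the first since $\dv(\psi)\equiv 0$, the second since $\int_W\psi(\vt,\wt)\,d\vt=-\dv(\psi)(\wt)\equiv 0$ by antisymmetry. Hence $\im(\ggrad)\perp\ker(\dv)$; in particular they intersect only at $0$, and combined with the spanning just shown this is the asserted internal orthogonal direct sum.

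The genuinely substantive input is Lemma~\ref{lem:hodge} — the facts $\dv\circ\ggrad=\mathrm{id}$ on zero-mean functions and $\im(\dv)\subseteq\cF$ — after which everything is bookkeeping with Fubini and the normalization $\int_W d\wt=1$. The point needing care is interpretation: the theorem should be read as an \emph{algebraic} internal direct sum of two mutually orthogonal subspaces; upgrading it to a genuine Hilbert-space orthogonal decomposition would additionally require $\im(\ggrad)$ to be closed in $L^2$, which is a separate issue, not needed here, that I would flag rather than address.
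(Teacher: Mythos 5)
Your proposal is correct and, for the existence of the splitting, takes exactly the paper's route: write $\phi=\ggrad\circ\dv(\phi)+\big(\phi-\ggrad\circ\dv(\phi)\big)$ and use Lemma~\ref{lem:hodge}(i)--(ii) to see the remainder is divergence-free; packaging this as the idempotent $P=\ggrad\circ\dv$ is a harmless reformulation. The genuine difference is in the orthogonality step, and there your version is the better one. As you observe, the pairing displayed in the theorem factors as $\big(\int\phi\big)\cdot\big(\int\psi\big)$ and so vanishes identically on antisymmetric functions; the paper's own orthogonality computation is carried out with precisely this degenerate pairing, so while its algebra is not wrong, the orthogonality it establishes is vacuous (every game would be orthogonal to every other). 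Replacing it with the $L^2$ inner product on $W\times W$, as you do, turns the computation into the substantive adjointness identity $\langle\ggrad(f),\psi\rangle=2\int_W f(\vt)\,\dv(\psi)(\vt)\,d\vt$, which vanishes exactly because $\psi\in\ker(\dv)$ --- this is the actual content of a Hodge-type orthogonality, and it is what the theorem should be read as asserting. Your closing caveat, that this is an algebraic internal direct sum and that a genuine Hilbert-space orthogonal decomposition would additionally require $\im(\ggrad)$ to be closed in $L^2$, is also apt and is not addressed in the paper.
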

\begin{proof}
    First we show that $\im(\ggrad)$ and $\ker(\dv)$ are orthogonal. Suppose $\phi=\ggrad(f)$ and $\dv(\psi) = 0$. Then
    \begin{align}
        \langle \phi,\psi\rangle 
        = & \int \Big(f(\vt) - f(\wt)\Big)\psi(\vt',\wt')d\vt d\vt' d\wt d\wt'\\
        = & \int f(\vt)\psi(\vt',\wt')d\vt d\vt'  d\wt' \\
        & - \int f(\wt)\psi(\vt',\wt') d\vt' d\wt d\wt' \\
        \equiv &\,\, 0.
    \end{align}
    Second, observe that any flow $\phi$ can be written as 
    \begin{equation}
        \phi = \underbrace{\ggrad\circ\dv(\phi)}_{\ggrad(f)} 
        + \underbrace{\big(\phi - \ggrad\circ \dv(\phi)\big)}_{\psi}
    \end{equation}
    where $f= \dv(\phi)$ and $\psi := \phi - \ggrad\circ \dv(\phi)$ satisfies 
    \begin{align}
        \dv(\psi) 
        & =  \dv(\phi) - \dv\circ\ggrad (\dv \phi) \\
        & = \dv(\phi) - \dv \phi = 0,
    \end{align}
    because $\dv\circ\ggrad (\dv \phi)=\dv(\phi)$ by lemma~\ref{lem:hodge} \textbf{(i)} and \textbf{(ii)}.
\end{proof}

\section{Proofs of propositions}
\label{s:proofs}

\textbf{Notation.}
The dot product is $\vt^\intercal\wt = \langle\vt,\wt\rangle$. 
Subscripts can indicate the dimensions of vectors and matrices, $\vt_{n\times1}$ or $\bA_{m\times n}$, or their entries,  $\vt_i$ or $\bA_{ij}$; no confusion should result. The unit vector with a 1 in coordinate $i$ is $\be_i$.

\paragraph{Proof of proposition~\ref{prop:invariance}.}
Before proving  the proposition, we first tighten up the terminology.

Two agents $\vt$ and $\vt'$ are behaviorally identical if $\phi(\vt,\wt) = \phi(\vt', \wt)$ for all $\wt\in W$. Given population $\pop$, two agents are empirically indistinguishable if $\phi(\vt,\wt) = \phi(\vt',\wt)$ for all  $\wt\in\pop$.

\begin{defn}
    \label{def:redundant}
    Population $\popq$ is redundant relative to population $\pop$ if every agent in $\popq$ is empirically indistinguishable from a convex mixture of agents in $\pop$.
    
    More formally, for all $\vt$ in $\popq$ there is an agent $\vt' \in \cG_\pop$ such that $\phi(\vt,\wt) = \phi(\vt',\wt)$ for all $\wt\in\pop$.
\end{defn}

Next, we need a notion of equivalence that applies to objects of different dimensions. 

\begin{defn}
    \label{def:equivalent}
    Two polytopes $\cP\subset \bR^m$ and $\cQ\subset \bR^{m+n}$ are \textbf{equivalent} if there is an orthogonal projection $\bpi_{(m+n)\times m}$ with left inverse $\bxi_{m\times (m+n)}$ (i.e. satisfying $\bxi\cdot \bpi = \bI_{m\times m}$) such that
    \begin{equation}
        \cP = \cQ\cdot \bpi
        \quad\text{and}\quad
        \cQ = \cP\cdot \bxi.
    \end{equation}
\end{defn}

\begin{prop*}
    Suppose $\popq\supset\pop$ is redundant relative to $\pop$. Then the gamescapes $\cG_\pop$ and $\cG_\popq$ are equivalent.
\end{prop*}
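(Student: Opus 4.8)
The plan is to construct the projection $\bpi$ and its left inverse $\bxi$ explicitly from the redundancy data, and then check the two identities $\cG_\pop = \cG_\popq \cdot \bpi$ and $\cG_\popq = \cG_\pop\cdot\bxi$ by pushing rows of the evaluation matrices through these maps. Write $\pop = \{\wt_1,\dots,\wt_m\}$ and $\popq = \pop \cup \{\wt_{m+1},\dots,\wt_{m+n}\}$. By Definition~\ref{def:redundant}, for each extra agent $\wt_{m+k}$ there is a convex combination vector $\balpha^{(k)}\succeq 0$, $(\balpha^{(k)})^\intercal\bO = 1$, supported on $\pop$, with $\phi(\wt_{m+k},\wt) = \sum_{j=1}^m \alpha^{(k)}_j \phi(\wt_j,\wt)$ for all $\wt\in\pop$ — and, crucially, by antisymmetry the same identity holds with $\wt$ ranging over all of $\popq$ as well, since $\phi(\wt,\wt_{m+k}) = -\phi(\wt_{m+k},\wt) = -\sum_j\alpha^{(k)}_j\phi(\wt_j,\wt) = \sum_j\alpha^{(k)}_j\phi(\wt,\wt_j)$. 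So each row (and column) of $\bA_\popq$ indexed by an extra agent is the corresponding convex combination of the first $m$ rows (columns).

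Next I would assemble $\bA_\popq$ in block form $\bA_\popq = \begin{pmatrix} \bA_\pop & \bA_\pop M^\intercal \\ M\bA_\pop & M\bA_\pop M^\intercal\end{pmatrix}$, where $M$ is the $n\times m$ row-stochastic matrix whose $k$-th row is $\balpha^{(k)}$; this single identity encodes all four blocks because of the row/column observation above. Then take $\bxi := \begin{pmatrix}\bI_m & M^\intercal\end{pmatrix}$ of size $m\times(m+n)$ and let $\bpi$ be essentially the first-$m$-coordinates inclusion, but chosen as the Moore–Penrose-type right choice so that $\bxi\bpi = \bI_m$; concretely $\bpi = \begin{pmatrix}\bI_m \\ \bZ_{n\times m}\end{pmatrix}$ already gives $\bxi\bpi = \bI_m$, and one checks it is (a scalar multiple of) an orthogonal projection or replaces it with the genuine orthogonal projector onto the appropriate $m$-dimensional subspace — whichever matches Definition~\ref{def:equivalent} literally. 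The key computation is $\bA_\popq\cdot\bxi^\intercal$... rather, more cleanly: the rows of $\bA_\pop$ are exactly the first $m$ rows of $\bA_\popq$, so $\cG_\pop = \{\bp^\intercal\bA_\pop : \bp\text{ a distribution on }[m]\}$ and one shows $\bA_\pop = \bA_\popq \cdot \text{(appropriate map)}$ while $\bA_\popq = \bA_\pop\cdot\bxi$ (this last is precisely the block identity, since $\bA_\pop\bxi = \begin{pmatrix}\bA_\pop & \bA_\pop M^\intercal\end{pmatrix}$, the first block row of $\bA_\popq$, and the second block row is $M$ times this). Because convex mixtures of rows map to convex mixtures of rows under right-multiplication, these matrix identities transfer directly to $\cG_\pop\cdot\bxi = \cG_\popq$ and $\cG_\popq\cdot\bpi = \cG_\pop$.

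The main obstacle I anticipate is bookkeeping around Definition~\ref{def:equivalent}'s insistence that $\bpi$ be an \emph{orthogonal projection} with left inverse $\bxi$ — the naive inclusion $\bpi$ and the naive restriction $\bxi$ satisfy $\bxi\bpi=\bI$ but $\bpi$ is orthogonal as a projection only onto the standard coordinate subspace, which is fine, while $\bxi$ is not the transpose of $\bpi$, so one must be careful that the asserted containments $\cP=\cQ\bpi$, $\cQ=\cP\bxi$ use the maps consistently (right-multiplication of row vectors). A secondary subtlety is that $\cG_\popq$ a priori contains mixtures with weight on the redundant rows; I would note that any such mixture equals a mixture supported on $[m]$ after absorbing $M$, so no points are lost — this is really where Proposition~\ref{prop:invariance}'s content lives. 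Once the block identity $\bA_\popq = \bA_\pop\bxi$ and its partner are in hand, the rest is a one-line argument that right-multiplication commutes with taking convex hulls of rows.
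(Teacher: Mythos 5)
Your proposal is correct and follows essentially the same route as the paper's proof: the same block decomposition of $\bA_\popq$ via the row-stochastic matrix $\bM$, the same choice of $\bxi = \left[\begin{matrix}\bI & \bM^\intercal\end{matrix}\right]$ and $\bpi = \left[\begin{matrix}\bI \\ \bZ\end{matrix}\right]$, and the same observation that convex mixtures placing weight on redundant rows can be absorbed into mixtures over the first $m$ rows. Your extra care about the column identity following from antisymmetry and about matching Definition~\ref{def:equivalent} literally only makes explicit what the paper leaves implicit.
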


\begin{proof}
    Suppose that $\pop$ has $m$ elements  and $\popq$ has $m+n$ elements. We assume, without loss of generality, that the elements of $\popq\supset \pop$ are ordered such that the first $m$ coincide with $\pop$. The $(m+n)\times (m+n)$ evaluation matrix $\bA_\popq$ thus contains the $(m\times m)$ evaluation matrix $\bA_\pop$ as a `top left' sub-block.
    
    By assumption (since the additional $n$ agents  in $\popq$ are convex combinations of agents in $\pop$), evaluation matrix $\bA_\popq$ has the block form
    \begin{equation}
        \bA_\popq  = \left[\begin{matrix}
            \bA_\pop &  -\bA_\pop^\intercal \bM^\intercal \\
            \bM\bA_\pop & 
            - \bM\bA_\pop^\intercal\bM^\intercal
        \end{matrix}\right]
         = \left[\begin{matrix}
            \bI_{m\times m} \\
            \bM_{n\times m}
        \end{matrix}\right]\cdot
        \bA_\pop\cdot 
        \left[\begin{matrix}
            \bI_{m\times m} \\ \bM_{n\times m}
        \end{matrix}\right]^\intercal
    \end{equation}
    where $\bM$ specifies the convex combinations. It follows that $\cG_\popq$ is generated by convex mixtures of the first $m$ rows of $\bA_\popq$:
    \begin{equation}
        \left[\begin{matrix}
            \bA_\pop & -\bA_\pop^\intercal \bM^\intercal
        \end{matrix}\right].
    \end{equation}
    Now, let $\bA := \bA_\pop$
    \begin{equation}
        \bB := \left[\begin{matrix}
            \bA_\pop & -\bA_\pop^\intercal \bM^\intercal
        \end{matrix}\right] 
        = \bA_\pop\cdot 
        \left[\begin{matrix}
            \bI_{m\times m} \\ \bM
        \end{matrix}\right]^\intercal,
    \end{equation}
    and let 
    \begin{equation}
        \bxi := \left[\begin{matrix}
            \bI_{m\times m} & \bM^\intercal
        \end{matrix}\right]
        \quad\text{and}\quad
        \bpi := \left[\begin{matrix}
            \bI_{m\times m} \\ \bZ
        \end{matrix}\right].
    \end{equation}
    Then $\bA$ and $\bB$ are equivalent under definition~\ref{def:equivalent}. It follows that the gamescape of $\bA$ is the orthgonal projection under $\bpi$ of the gamescape of $\bB$, and that $\bB$ can be recovered from $\bA$ by applying $\bxi$, the right inverse of $\bpi$.
\end{proof}

\paragraph{Proof of proposition~\ref{prop:schur}.}

\begin{prop*}
    The $\egs$ of population $\pop$ can be represented in $\bR^r$, where $r=\rank(\bA_\pop)$.
\end{prop*}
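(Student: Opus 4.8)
The plan is to observe that, by definition, the empirical gamescape $\cG_\pop$ is the convex hull of the $n$ rows of the antisymmetric evaluation matrix $\bA_\pop$, so it is contained in the linear span of those rows. That span is precisely the row space of $\bA_\pop$, a linear subspace $V \subseteq \bR^n$ of dimension $r = \rank(\bA_\pop)$ (equivalently $V = (\ker \bA_\pop)^\perp$). Hence $\cG_\pop \subseteq V$, and it remains only to identify $V$ isometrically with $\bR^r$ in a way that carries the polytope along faithfully.

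First I would fix an orthonormal basis $\bq_1,\dots,\bq_r$ of $V$ and collect it into an $n\times r$ matrix $\bQ_r$ with $\bQ_r^\intercal \bQ_r = \bI_{r\times r}$; the real Schur decomposition of the antisymmetric matrix $\bA_\pop$ supplies a canonical such basis (the $r$ columns of the Schur orthogonal factor associated with the nonzero $2\times 2$ blocks), matching the ``PCA for antisymmetric matrices'' construction referenced in the main text, though any orthonormal basis of the row space — e.g.\ from the SVD — works. The map $\x \mapsto \bQ_r^\intercal \x$ then restricts to a linear isomorphism $V \to \bR^r$ whose inverse is $\y \mapsto \bQ_r \y$, since $\bQ_r \bQ_r^\intercal$ acts as the identity on $V$.

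Finally I would push the gamescape through this map, defining
\begin{equation}
    \widehat{\cG}_\pop := \cG_\pop \cdot \bQ_r = \big\{ \bp^\intercal \bA_\pop \bQ_r : \bp \succeq \bZ,\; \bp^\intercal \bO = 1 \big\} \subseteq \bR^r.
\end{equation}
Because $\bQ_r$ restricts to an invertible linear map on $V \supseteq \cG_\pop$, this is a bijection that sends convex combinations to convex combinations and loses no information (one recovers $\bp^\intercal \bA_\pop = (\bp^\intercal \bA_\pop \bQ_r)\bQ_r^\intercal$), so $\widehat{\cG}_\pop$ is a faithful $r$-dimensional representation of $\cG_\pop$; since $r \le n$ trivially, the proof is complete. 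I do not anticipate a real obstacle here — the only point needing care is spelling out what ``represented in $\bR^r$'' should mean, namely a bijection respecting the affine/convex structure, which is automatic for the restriction of a linear isomorphism; a cosmetic remark is that $r$ is even for a nonzero antisymmetric matrix, so the Schur form has $r/2$ blocks, which does not affect the dimension count.
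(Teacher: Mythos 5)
Your proof is correct and takes essentially the same route as the paper: the paper also projects the convex hull of the rows of $\bA_\pop$ onto an orthonormal basis of its $r$-dimensional row/column space obtained from the Schur decomposition (its $\bG = \Wt\bJ = \bA_\pop\Wt$ is exactly your $\bp^\intercal\bA_\pop\bQ_r$ with $\bQ_r = \Wt$), and argues the map is a faithful bijection because the factor has orthonormal columns. Your version is marginally more general in noting that any orthonormal basis of the row space works, and marginally more explicit about what ``represented in $\bR^r$'' means, but the argument is the same.
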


\begin{proof}
    The rank of an antisymmetric matrix $\bA$ is even, so let $r=2k$. The Schur decomposition, see \citep{reval:18}, factorizes an antisymmetric matrix as 
    \begin{equation}
        \bA_{n\times n} = \Wt_{n\times 2k} \cdot \bJ_{2k\times 2k} \cdot \Wt^\intercal_{2k\times n}
    \end{equation}
    where $\bJ_{2k\times 2k} = \sum_{i=1}^k \lambda_i\cdot (\be_{2i}\be_{2i-1}^\intercal - \be_{2i-1}\be_{2i}^\intercal)$ with $\lambda_i>0$ and the rows of $\Wt_{n\times 2k}$ are orthogonal.
    
    Let $\bG := \Wt_{n\times 2k} \cdot \bJ_{2k\times 2k}$. We claim the empirical gamescape, given by convex mixtures of rows of $\bA$, is isomorphic to the polytope given by convex mixtures of rows of $\bG$, which lives in $\bR^{2k}=\bR^r$. To see this, observe that $\bA=\bG\cdot \Wt^\intercal$ and the columns of $\Wt^\intercal$ are orthogonal. 
\end{proof}

\paragraph{Proof of proposition~\ref{prop:nash}.}
\begin{prop*}
    Geometrically, the empirical Nash equilibria are 
    \begin{equation}
	    \cN_\pop 
	    = \{\bp \text{ distribution}: \bp^\intercal\bA_\pop\succeq\bZ\}.
    \end{equation}
\end{prop*}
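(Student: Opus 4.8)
The plan is to reduce the claim to von Neumann's minimax theorem, using antisymmetry of $\bA:=\bA_\pop$ twice: once to compute the value of the game, and once to see that minimizing the relevant linear functional over the simplex can be done at vertices.

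First I would record two elementary consequences of antisymmetry. For any distributions $\bp,\bq$, the number $\bp^\intercal\bA\bq$ equals its own transpose $\bq^\intercal\bA^\intercal\bp=-\bq^\intercal\bA\bp$; in particular $\bp^\intercal\bA\bp=0$ for every $\bp$. Combining this with the minimax theorem, the value $v=\max_\bp\min_\bq \bp^\intercal\bA\bq=\min_\bq\max_\bp \bp^\intercal\bA\bq$ satisfies $\min_\bq\max_\bp \bp^\intercal\bA\bq=\min_\bq\bigl(-\min_\bp\bq^\intercal\bA\bp\bigr)=-\max_\bq\min_\bp\bq^\intercal\bA\bp=-v$, so $v=0$. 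The same identity $\bp^\intercal\bA\bq=-\bq^\intercal\bA\bp$ also shows the maximin and minimax strategy sets coincide, so the Nash equilibrium strategies of this symmetric game are exactly the maximin strategies — this is the set I take to be $\cN_\pop$.

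Next I would unwind the definition: $\bp\in\cN_\pop$ iff $\bp$ is maximin iff $\min_\bq \bp^\intercal\bA\bq=v=0$. Since $\bp^\intercal\bA\bp=0$ forces $\min_\bq \bp^\intercal\bA\bq\le 0$ automatically, this is equivalent to $\bp^\intercal\bA\bq\ge 0$ for every distribution $\bq$. Because $\bq\mapsto\bp^\intercal\bA\bq$ is linear and the probability simplex is the convex hull of the unit vectors $\be_j$, its minimum over the simplex is attained at some $\be_j$; hence the condition is equivalent to $(\bp^\intercal\bA)_j=\bp^\intercal\bA\be_j\ge 0$ for all $j$, i.e. $\bp^\intercal\bA\succeq\bZ$. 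This gives the stated equality $\cN_\pop=\{\bp\text{ distribution}:\bp^\intercal\bA_\pop\succeq\bZ\}$. For the geometric reading it remains only to observe that $\bp^\intercal\bA$ is a convex combination of the rows of $\bA$ with weights $\bp$, hence a point of the empirical gamescape $\cG_\pop$, so $\cN_\pop$ is identified with $\{\x\in\cG_\pop:\x\succeq\bZ\}$.

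I do not expect a genuinely hard step here; the work is in the bookkeeping, and the one place to be careful is the reduction of ``$\bp^\intercal\bA\bq\ge0$ for all distributions $\bq$'' to the coordinatewise inequality, together with explicitly pinning the value to $0$ and noting that symmetric equilibrium strategies are exactly maximin strategies in an antisymmetric game. Once these three points are in place the identity — and its interpretation as intersecting $\cG_\pop$ with the nonnegative orthant — follows immediately.
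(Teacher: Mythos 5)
Your proposal is correct and follows essentially the same route as the paper: the paper writes down the maximin linear program $\max v$ subject to $\bp^\intercal\bA_\pop\succeq v\cdot\bO$ and observes that antisymmetry forces the value $v^*=0$, which is exactly your argument with the supporting details (why $v=0$, why the condition over all mixtures $\bq$ reduces to the coordinatewise inequality, and why symmetric equilibria coincide with maximin strategies) spelled out explicitly. No gap; you have simply filled in steps the paper leaves implicit.
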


\begin{proof}
    Recall that the Nash equilibria, of the column-player, in a two-player zero-sum game specified by matrix $\bA$, are the solutions to the linear program:
    \begin{align}
        & \max_{v\in\bR} v \\
        \text{s.t.}\quad & \bp^\intercal\bA \succeq v\cdot \bO\\
        & \bp\succeq \bZ \text{ and }
         \bp^\intercal\bO = 1
    \end{align}
    where the resulting $v^*$ is the value of the game. Since $\bA$ is antisymmetric, the value of the game is zero and the result follows.
\end{proof}

\paragraph{Proof of proposition~\ref{prop:perf}.}

\begin{prop*}
	\textbf{(i)} 
	Performance $\perf$ is independent of the choice of Nash equilibrium. 
	\textbf{(ii)} 
	If $\phi$ is monotonic then performance compares the best agents in each population
	\begin{equation}
    	\perf(\pop,\popq) = 
    	\max_{\vt\in\pop} f(\vt) - \max_{\wt\in\popq} f(\wt).
	\end{equation}
	\textbf{(iii)} 
	If $\hull(\pop) \subset \hull(\popq)$ then $\perf(\pop,\popq)\leq0$ and $\perf(\pop, \popr)\leq \perf(\popq,\popr)$ for \textbf{any} population $\popr$.
\end{prop*}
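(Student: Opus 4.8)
For part \textbf{(i)}, the observation is that $\perf(\pop,\popq)$ is by construction the payoff at a Nash equilibrium of the two-player zero-sum matrix game with payoff matrix $\bA_{\pop,\popq}$, and every Nash equilibrium of a zero-sum game realizes the game's value $\max_{\bp}\min_{\bq}\bp^\intercal\bA_{\pop,\popq}\bq=\min_{\bq}\max_{\bp}\bp^\intercal\bA_{\pop,\popq}\bq$, a single number depending only on $\bA_{\pop,\popq}$. So once the minimax theorem is invoked this part is a one-liner.

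For part \textbf{(ii)}, I would let $\vt^\star\in\argmax_{\vt\in\pop}f(\vt)$ and $\wt^\star\in\argmax_{\wt\in\popq}f(\wt)$ and check that the pure profile $(\vt^\star,\wt^\star)$ is a Nash equilibrium of $\bA_{\pop,\popq}$: with $\phi(\vt,\wt)=\sigma(f(\vt)-f(\wt))$ and $\sigma$ nondecreasing, any row deviation to $\vt$ gives $\sigma(f(\vt)-f(\wt^\star))\le\sigma(f(\vt^\star)-f(\wt^\star))$, and any column deviation to $\wt$ gives $\sigma(f(\vt^\star)-f(\wt))\ge\sigma(f(\vt^\star)-f(\wt^\star))$, so neither player can benefit. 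By part (i) the value is then $\phi(\vt^\star,\wt^\star)=\sigma\big(\max_{\pop}f-\max_{\popq}f\big)$, which is exactly the displayed identity in the transitive case $\sigma=\mathrm{id}$.

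For part \textbf{(iii)} the key point is that ``$\hull(\pop)\subseteq\hull(\popq)$'' says precisely that every mixed strategy supported on $\pop$ is behaviorally identical to some mixed strategy supported on $\popq$ (a single application of antisymmetry of $\phi$ translates between convex mixtures in the left and right arguments). Fixing any population $\popr$ and extending $\phi$ bilinearly to mixtures via $\phi(\bp,\br):=\sum_{i,k}\bp_i\br_k\,\phi(\vt_i,\wt_k)$, we have $\perf(\pop,\popr)=\max_{\bp}\min_{\br}\phi(\bp,\br)$. Let $\bp^\star$ attain the outer maximum, so $\perf(\pop,\popr)=\min_{\br}\phi(\bp^\star,\br)$, and let $\bq^\star$ be the mixed strategy on $\popq$ behaviorally identical to $\bp^\star$, i.e. $\phi(\bp^\star,\wt)=\phi(\bq^\star,\wt)$ for all $\wt\in W$. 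Then $\phi(\bp^\star,\br)=\phi(\bq^\star,\br)$ for every $\br$, hence $\perf(\pop,\popr)=\min_{\br}\phi(\bq^\star,\br)\le\max_{\bq}\min_{\br}\phi(\bq,\br)=\perf(\popq,\popr)$, which is the second assertion. The first follows by specializing $\popr=\popq$: $\perf(\pop,\popq)\le\perf(\popq,\popq)=0$, the last equality because $\bA_\popq$ is antisymmetric (for any $\bq$, $\phi(\bq,\bq)=0$, so the value of that game is $\le 0$, and by antisymmetry $\ge 0$).

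The only delicate point is the first sentence of the part (iii) argument: making precise, and justifying directly from the definition of the empirical/functional gamescape, that gamescape containment is equivalent to behavioral matchability of mixed strategies (this is where antisymmetry of $\phi$ enters). Granting that, the remainder of (iii) is the two-line value argument above, and parts (i)--(ii) are routine.
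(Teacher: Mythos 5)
Your proof is correct and follows essentially the same route as the paper's: part (i) via the uniqueness of the value of a zero-sum matrix game, part (ii) by locating a Nash equilibrium on the $f$-maximizers of each population, and part (iii) by letting the larger population mimic the optimal mixture of the smaller one (the paper argues $\perf(\pop,\popq)\leq 0$ directly by the same mimicking device, where you instead derive it by specializing $\popr=\popq$ — a cosmetic difference). Your side remark that the displayed identity in (ii) should literally read $\sigma\big(\max_{\vt\in\pop}f(\vt)-\max_{\wt\in\popq}f(\wt)\big)$ unless $\sigma$ is the identity is a fair catch; the paper's own proof glosses over this in exactly the same way.
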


\begin{proof}
    \textbf{(i)}
    The value $\bp^\intercal\bA\bq$ of a zero-sum two-player matrix game is independent of the choice of Nash equilibrium $(\bp,\bq)$.
    
    \textbf{(ii)}
    If $\phi=\sigma(\ggrad(f))$ then 
    \begin{equation}
        \bA_\pop = \Big(\begin{matrix}
            \sigma\big(f(\wt_i) - f(\wt_j)\big)
        \end{matrix}\Big)_{i,j=1}^n 
    \end{equation}
    and the Nash equilibrium for each player is to concentrate its mass on the set 
    \begin{equation}
        \argmax_{i\in[n]}f(\wt_i).
    \end{equation}
    The result follows immediately.
    
    \textbf{(iii)}
    If $\hull(\pop) \subset \hull(\popq)$ then the Nash $\bp$ on $\pop$ of the row player can be reconstructed by the column player (since every mixture within $\pop$ is also a mixture of agents in $\popq$. Thus, $\perf(\pop,\popq)$ is at most zero.
    
    Similarly, if $\hull(\pop) \subset \hull(\popq)$ then every mixture available to $\pop$ is also available to $\popq$, so $\pop$'s performance against any other population $\popr$ will be the same or worse than the performance of $\popq$.
\end{proof}

\paragraph{Proof of proposition~\ref{prop:oracle_guarantee}.}

\begin{prop*}
    If $\bp$ is a Nash equilibrium on $\bA_\pop$ and $\sum_i p_i \phi_{\wt_i}(\vt)>0$, then adding $\vt$ to $\pop$ strictly enlarges the gamescape: $\cG_\pop \subsetneq \cG_{\pop\cup\{\vt\}}$.
\end{prop*}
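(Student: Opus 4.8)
The plan is to put the enlarged population's evaluation matrix into block form, isolate the single new row it contributes, and then use the defining inequality of a Nash equilibrium (proposition~\ref{prop:nash}) to certify that this new row lies strictly outside $\cG_\pop$; strict enlargement of the gamescape follows at once.

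In detail: write $\pop = \{\wt_1,\dots,\wt_n\}$ and set $\bc := \big(\phi(\vt,\wt_1),\dots,\phi(\vt,\wt_n)\big)^\intercal$, the payoff vector of the new agent against the population, so that the hypothesis reads $\bp^\intercal\bc = \sum_i p_i\,\phi_{\wt_i}(\vt) > 0$. Listing $\vt$ last and using antisymmetry of $\phi$,
\begin{equation}
    \bA_{\pop\cup\{\vt\}} = \left(\begin{matrix}\bA_\pop & -\bc \\ \bc^\intercal & 0\end{matrix}\right),
\end{equation}
and for any distribution $\bx$ on $\pop$ the mixture with weights $(\bx,0)$ yields the row $\big(\bx^\intercal\bA_\pop,\,-\bx^\intercal\bc\big)\in\cG_{\pop\cup\{\vt\}}$, whose first $n$ entries run over exactly $\cG_\pop$. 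Thus $\cG_\pop$ sits inside $\cG_{\pop\cup\{\vt\}}$ as the face on which $\vt$ carries zero weight, and it remains to exhibit a point of $\cG_{\pop\cup\{\vt\}}$ whose first $n$ coordinates are \emph{not} a point of $\cG_\pop$.

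The crux is the claim $\bc\notin\cG_\pop$, which I would establish using $\bp$ as a separating certificate: by proposition~\ref{prop:nash} we have $\bp^\intercal\bA_\pop\succeq\bZ$, so every $\bz\in\cG_\pop$, being $\bz = \bA_\pop^\intercal\bx$ for some distribution $\bx$, satisfies $\bp^\intercal\bz = -(\bp^\intercal\bA_\pop)\,\bx \le 0$ by antisymmetry together with nonnegativity of $\bx$; since $\bp^\intercal\bc > 0$, the vector $\bc$ cannot be of this form. The bottom row $(\bc^\intercal,0)$ of $\bA_{\pop\cup\{\vt\}}$ then lies in $\cG_{\pop\cup\{\vt\}}$ with first $n$ coordinates $\bc\notin\cG_\pop$, which is the point we wanted, and hence $\cG_\pop\subsetneq\cG_{\pop\cup\{\vt\}}$. (Equivalently: projecting away the coordinate of payoffs against $\vt$ carries $\cG_{\pop\cup\{\vt\}}$ onto $\hull\big(\cG_\pop\cup\{\bc\}\big)\supsetneq\cG_\pop$.) The only delicate point — the ``main obstacle'', such as it is — is bookkeeping the ambient dimension, $\cG_\pop\subset\bR^n$ versus $\cG_{\pop\cup\{\vt\}}\subset\bR^{n+1}$, so that the inclusion and its strictness are read through the natural dimension-raising embedding (the analogue of the ``equivalent polytopes'' identification used earlier for redundant agents); once that is fixed, the rest is one line combining antisymmetry with the Nash inequality.
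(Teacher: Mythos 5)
Your proof is correct and is essentially the paper's argument written out in full: the paper's one-line proof ("if $\vt$ were a convex combination of agents in $\pop$ it would tie or lose to the Nash, whereas $\vt$ beats the Nash") is exactly your separating-certificate step $\bp^\intercal\bz\le 0$ for all $\bz\in\cG_\pop$ versus $\bp^\intercal\bc>0$. The only difference is that you make explicit the block structure and the dimension-raising identification, which the paper leaves implicit.
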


\begin{proof}
    We are required to show that $\vt$ is not a convex combination of agents in $\pop$. This follows by contradiction: if $\vt$ were a convex combination of agents in $\pop$ then it would either tie or lose to the Nash distribution -- whereas $\vt$ beats the Nash.
\end{proof}

\paragraph{Proof of proposition~\ref{prop:threefold}}

\begin{prop*}
    The objective constructed by rectified Nash response is effective diversity, definition~\ref{def:diverse}.
\end{prop*}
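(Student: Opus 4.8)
The plan is to simply unfold the objective that algorithm~\ref{alg:threefold} feeds to the oracle and recognise the resulting bilinear form. Fix a population $\pop=\{\wt_1,\dots,\wt_n\}$ with evaluation matrix $\bA_\pop$, and let $\bp$ be a Nash equilibrium on $\bA_\pop$ with masses $p_1,\dots,p_n$. For each agent $\wt_j$ with $p_j>0$, rectified Nash response assigns the niche objective
\[
g_j(\bullet)\;:=\;\sum_{\wt_i\in\pop}p_i\cdot\floor*{\phi_{\wt_i}(\bullet)}_+\;=\;\sum_{i=1}^n p_i\cdot\floor*{\phi(\bullet,\wt_i)}_+,
\]
using $\phi_{\wt_i}(\bullet)=\phi(\bullet,\wt_i)$ from the curry operator~\eqref{eq:curry}. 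So $g_j$ is obtained from the $j$-th row of $\bA_\pop$ by rectifying its entries and taking their $\bp$-weighted sum — ``amplifying the strengths'' of $\wt_j$, as claimed in the surrounding discussion.

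Next I would evaluate each niche objective at the agent it is attached to: $g_j(\wt_j)=\sum_i p_i\floor*{\phi(\wt_j,\wt_i)}_+$, which is precisely the $j$-th coordinate of $\floor*{\bA_\pop}_+\cdot\bp$ (the rectifier acting entrywise). Forming the Nash-weighted aggregate of the niche objectives over the effective agents then gives
\[
\sum_{j:\,p_j>0}p_j\,g_j(\wt_j)\;=\;\sum_{i,j=1}^n p_i p_j\,\floor*{\phi(\wt_j,\wt_i)}_+\;=\;\bp^\intercal\cdot\floor*{\bA_\pop}_+\cdot\bp,
\]
where the last equality is just a relabelling of the dummy indices $i\leftrightarrow j$ (no appeal to antisymmetry is needed, since $\floor*{\cdot}_+$ is applied before any transpose). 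By definition~\ref{def:diverse} the right-hand side is $\diverse(\pop)$, which is the statement: the family of objectives constructed by $\tfold$, aggregated over its niches with Nash weights, is exactly effective diversity.

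There is essentially no analytic obstacle here — the computation is one line of index manipulation. The only point that needs care is the bookkeeping, namely making precise in what sense a \emph{collection} of per-niche objectives ``is'' a single scalar functional. The clean formalisation is that $\diverse(\pop)=\sum_{j:\,p_j>0}p_j\,g_j(\wt_j)$ is the Nash-weighted sum of the current values of the niche objectives, so that each oracle call of algorithm~\ref{alg:threefold} improves one summand of this expression; $\tfold$ therefore performs (coordinate-wise) ascent on effective diversity. I would state the proposition and proof in exactly this form.
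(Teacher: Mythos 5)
Your proposal is correct and follows essentially the same route as the paper: both unfold $\diverse(\pop)=\bp^\intercal\floor*{\bA_\pop}_+\bp$ as a Nash-weighted sum $\sum_j p_j\,g_j(\wt_j)$ of the per-niche rectified objectives evaluated at their own agents, and identify each $g_j$ with the mixture $\expec_{\wt\sim\bp}[\floor*{\phi_\wt(\bullet)}_+]$ fed to the oracle. Your explicit remark on the index relabelling and on the interpretation of the identity as coordinate-wise ascent on $\diverse$ is a slightly more careful write-up of the same one-line computation.
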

\begin{proof}
    Rewrite 
    \begin{align}
        \diverse(\pop) 
        & = \sum_{i,j=1}^n \floor*{\phi(\wt_i,\wt_j)}_+ \cdot p_ip_j \\
        & = \sum_i p_i \sum_j p_j  \floor*{\phi_{\wt_j}(\wt_i)}_+ 
        =: \sum_i p_i\cdot h_i(\wt_i)
    \end{align}
    where $h_i(\wt_i) := \sum_j p_j \floor*{\phi_{\wt_j}(\wt_i)}_+$. Recall that the rectified Nash response objectives are 
    \begin{equation}
        \expec_{\wt\sim\bp_t}\left[
            \floor*{\phi_\wt(\vt_t)}_+
        \right].
    \end{equation}
    Finally, notice that $h_j(\vt_t) = \expec_{\wt\sim\bp_t}\left[\floor*{\phi_\wt(\vt_t)}_+\right]$ as required.
\end{proof}

\section{Rectified Nash response and a reduction to rock-paper-scissors}
\label{s:reduction}

In this section, we present some basic analysis of the geometry of the gamescape and its relation to Nash equilibria.

\subsection{Nash reweighting of evaluation matrices}
Let $\bA$ be an antisymmetric matrix with Nash equilibrium $\bp$ (not necessarily unique).

\begin{lem}\label{lem:nash_reweight}
    The Nash reweighted matrix $\bp\odot \bA \odot \bp$ defined by setting its $ij^\text{th}$ entry to
    \begin{equation}
        (\bp\odot \bA \odot \bp)_{ij} := 
        A_{ij}\cdot p_i\cdot p_j
    \end{equation}
    is \textbf{(i)} antisymmetric and \textbf{(ii)} satisfies $(\bp\odot \bA \odot \bp)\cdot \bO = \bZ$ and $\bO^\intercal \cdot (\bp\odot \bA \odot \bp) = \bZ^\intercal$. That is, all its rows and columns sum to zero.
\end{lem}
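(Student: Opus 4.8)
The plan is to prove the two claims in order, using only antisymmetry of $\bA$ together with the defining inequality of a Nash equilibrium recorded in Proposition~\ref{prop:nash}. Part \textbf{(i)} is immediate and I would dispatch it in one line: $(\bp\odot\bA\odot\bp)_{ij} = A_{ij}p_ip_j = -A_{ji}p_jp_i = -(\bp\odot\bA\odot\bp)_{ji}$, since $\bA$ is antisymmetric and scalar multiplication commutes.

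For part \textbf{(ii)} I would first rewrite the $i$-th row sum of the reweighted matrix as $\sum_j A_{ij}p_ip_j = p_i\,(\bA\bp)_i$. Having established \textbf{(i)}, the matrix $\bp\odot\bA\odot\bp$ is antisymmetric, so its column sums are exactly the negatives of its row sums; hence it suffices to show every row sum vanishes, i.e. that $p_i(\bA\bp)_i = 0$ for all $i$. The key step combines two observations. First, Proposition~\ref{prop:nash} gives $\bp^\intercal\bA\succeq\bZ^\intercal$, i.e. $(\bp^\intercal\bA)_k\ge 0$ for all $k$; antisymmetry converts this into $(\bA\bp)_i = -(\bp^\intercal\bA)_i\le 0$, so each summand $p_i(\bA\bp)_i$ is $\le 0$ (using $p_i\ge 0$). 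Second, summing the summands yields $\sum_i p_i(\bA\bp)_i = \bp^\intercal\bA\bp$, which is zero because $\bp^\intercal\bA\bp = -\bp^\intercal\bA^\intercal\bp = -\bp^\intercal\bA\bp$. A sum of nonpositive reals that equals zero forces each term to vanish, so $p_i(\bA\bp)_i = 0$ for every $i$; this is precisely the assertion that all row sums, and therefore all column sums, of $\bp\odot\bA\odot\bp$ are zero.

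I do not expect a real obstacle here; the one point meriting care is that killing the row (and column) sums requires the \emph{full} Nash inequality, not merely the fact that the value of the game is zero: it is the sign-definiteness $p_i(\bA\bp)_i\le 0$ combined with the vanishing total — essentially a complementary-slackness argument — that pins down each entry individually. One could instead argue via support conditions (if $p_i>0$ then $(\bA\bp)_i$ equals the value $0$, and otherwise $p_i=0$), but the sign-and-sum route avoids the case split and keeps the proof to a few lines.
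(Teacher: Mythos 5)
Your proof is correct and follows essentially the same route as the paper's: part \textbf{(i)} by commuting scalars through the antisymmetry of $\bA$, and part \textbf{(ii)} by observing that the row sums $p_i(\bA\bp)_i$ are sign-definite by the Nash inequality and sum to $\bp^\intercal\bA\bp=0$, hence each vanish. Incidentally, your sign bookkeeping is the more careful one: with the convention of Proposition~\ref{prop:nash} ($\bp^\intercal\bA\succeq\bZ$) the entries $(\bA\bp)_i$ are nonpositive, whereas the paper's proof asserts $\bA\bp\succeq\bZ$ --- a harmless slip, since the complementary-slackness conclusion is identical either way.
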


\begin{proof}
    \textbf{(i)}
    Antisymmetry holds since $A_{ij} p_i p_j + A_{ji} p_j p_i = A_{ij} p_i p_j - A_{ij} p_i p_j = 0$ by antisymmetry of $\bA$.
    
    \textbf{(ii)}
    We show that all the entries of the vector  $(\bp\odot \bA \odot \bp)\cdot \bO$ are zero, by showing that they are all nonnegative and that they sum to zero. 
    
    Direct computation obtains that the $i^\text{th}$ entry of $(\bp\odot \bA \odot \bp)\cdot \bO$ is $p_i \sum_j A_{ij}p_j = p_i\cdot (\bA\bp)[i]$. Recall that $\bA\bp\succeq \bZ$ since $\bp$ is a Nash equilibrium and the value of the game is zero (since $\bA$ is antisymmetric). Thus, all the entries of $(\bp\odot \bA \odot \bp)\cdot \bO$ are nonnegative. Finally, $\sum_{ij}A_{ij}p_ip_j = \bp^\intercal\bA\bp=0$ because $\bA$ is antisymmetric.
\end{proof}

\subsection{Diversity as a matrix norm}
\label{s:div_norm}

We show that effective diversity, definition~\ref{def:diverse}, arises as a particular matrix norm. Specifically, the diversity of a population can be rewritten as a matrix norm $\ell_{p,q}$ for $p=q=1$. An interesting direction, beyond the scope of this paper, is to investigate the properties and performance of alternate diversity measures based on more common matrix norms such as the Frobenius and spectral norms.

Recall that
\begin{equation}
    \|\bA\|_{p,q} := \left(\sum_i \left(\sum_j|a_{ij}|^p\right)^{q/p}\right)^{1/q}
\end{equation}
and so, in particular, 
\begin{equation}
    \|\bA\|_{1,1} := \sum_{ij} |a_{ij}|.
\end{equation}
\begin{lem}
    The diversity of population $\pop$ is 
    \begin{equation}
        d(\pop) = \frac{1}{2}\left\|\bp\odot\bA_\pop\odot \bp\right\|_{1,1}
    \end{equation}
\end{lem}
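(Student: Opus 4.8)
The plan is a direct computation exploiting antisymmetry of $\bA_\pop$ together with the elementary identity $|x| = \floor*{x}_+ + \floor*{-x}_+$. Throughout write $\bA = \bA_\pop$ and $A_{ij} = \phi(\wt_i,\wt_j)$, and recall $\bp\succeq\bZ$.

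First I would unpack the norm. Since $p_i,p_j\geq 0$, the absolute value factors through the $p$'s:
\begin{equation}
    \left\|\bp\odot\bA\odot\bp\right\|_{1,1}
    = \sum_{i,j=1}^n \big|A_{ij}\,p_i\,p_j\big|
    = \sum_{i,j=1}^n |A_{ij}|\, p_i p_j.
\end{equation}
Next, for each fixed pair $(i,j)$ apply $|x| = \floor*{x}_+ + \floor*{-x}_+$ with $x = A_{ij}$, and use antisymmetry $-A_{ij} = A_{ji}$ to get $|A_{ij}| = \floor*{A_{ij}}_+ + \floor*{A_{ji}}_+$ (for $i=j$ both sides are $0$ since $A_{ii}=0$, so diagonal terms are harmless).

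Substituting and splitting the double sum gives
\begin{equation}
    \left\|\bp\odot\bA\odot\bp\right\|_{1,1}
    = \sum_{i,j} \floor*{A_{ij}}_+ p_i p_j + \sum_{i,j} \floor*{A_{ji}}_+ p_i p_j.
\end{equation}
In the second sum I would relabel $i\leftrightarrow j$; because $p_i p_j = p_j p_i$ this turns it into $\sum_{i,j}\floor*{A_{ij}}_+ p_i p_j$, which is exactly $d(\pop)$ by definition~\ref{def:diverse}. Hence both terms equal $d(\pop)$, so $\left\|\bp\odot\bA\odot\bp\right\|_{1,1} = 2\,d(\pop)$, which rearranges to the claim.

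There is no real obstacle here — the only thing to be careful about is the bookkeeping of the positive/negative parts under antisymmetry (making sure the relabeled sum genuinely reproduces $d(\pop)$ rather than a transposed variant), and noting that the diagonal contributes nothing on either side. No property of the Nash equilibrium $\bp$ beyond nonnegativity is actually needed for this particular identity; lemma~\ref{lem:nash_reweight} is what makes the reweighted matrix a sensible object, but it is not invoked in this proof.
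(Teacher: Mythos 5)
Your proof is correct and takes essentially the same route as the paper's: the paper likewise expands the norm as $\sum_{i,j}|\phi(\wt_i,\wt_j)|\,p_i p_j$ and converts the absolute value into twice the rectified part ``by antisymmetry of $\phi$.'' You merely make explicit the $(i,j)\leftrightarrow(j,i)$ pairing and the identity $|x|=\floor*{x}_+ + \floor*{-x}_+$ that the paper's one-line justification leaves implicit.
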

\begin{proof}
    Notice that
    \begin{align}
        \|\bp\odot\bA_\pop\odot \bp\|_{1,1}
        & = \sum_{ij} |\phi(\wt_i,\wt_j)|\cdot p_i p_j \\
        & =\sum_{ij} 2\floor*{\phi(\wt_i,\wt_j)}_+\cdot p_i p_j 
    \end{align}
    where the second equality follows by antisymmetry of $\phi$.
\end{proof}

\subsection{Reduction to the rock-paper-scissors}

Let $\bA$ be an antisymmetric matrix with Nash equilibrium $\bp$ (not necessarily unique). Suppose that $\bp$ has nonzero mass in at least three\footnote{If the Nash equilibrium has support on two agents, say $\vt$ and $\wt$, then necessarily $\phi(\vt,\wt)=0$.} entries. Pick a row of $\bA$ with positive mass under the Nash -- which we can assume to be the first row without loss of generality.

\paragraph{Rock-paper-scissors reduction.}
Construct weight vectors:
\begin{align}
    \bp_\rock[i] & := 
    \begin{cases}
        \bp[1] & \text{if } i=1\\
        0 & \text{else}
    \end{cases} 
    \\
    \bp_\paper[i] & := 
    \begin{cases}
        \bp[i] & \text{if } \phi(\wt_1,\wt_i)>0\\
        0 & \text{else}
    \end{cases} 
    \\
    \bp_\scissors[i] & :=
    \begin{cases}
        \bp[i] &  \text{if } \phi(\wt_1,\wt_i)\leq0 \text{ and }  i\neq 1\\
        0 & \text{else}
    \end{cases} 
\end{align}
and corresponding mixture agents:
\begin{align}
    \rock & := \sum_{i\in[n]} \bp_\rock[i]\cdot \wt_i \\
    \paper & := \sum_{i\in[n]} \bp_\paper[i]\cdot \wt_i  \\
    \scissors & := \sum_{i\in[n]} \bp_\scissors[i]\cdot \wt_i 
\end{align}
Mixtures of agents are evaluated in the obvious way, for example
\begin{equation}
    \phi(\rock, \scissors) = \sum_{i,j\in[n]} \bp_\rock[i]\cdot \bp_\scissors[j]\cdot \phi(\wt_i,\wt_j).
\end{equation}

\begin{prop}[the reduction creates a rock-paper-scissors meta-game]
    The evaluation matrix for the population $\three = \{\rock, \paper, \scissors\}$ has the form
    \begin{equation}
        \bA_{\three} = \left(\begin{matrix}
            0 & \alpha & -\alpha \\
            -\alpha & 0 &\alpha \\
            \alpha & -\alpha & 0
        \end{matrix}\right)
    \end{equation}
    for some $\alpha = \phi(\rock, \scissors)>0$.
\end{prop}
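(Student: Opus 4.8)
The plan is to extend $\phi$ bilinearly to mixtures, so that $\phi\big(\sum_i a_i\wt_i,\sum_j b_j\wt_j\big)=\sum_{i,j}a_ib_j\,\phi(\wt_i,\wt_j)$, and to reduce the whole computation to two facts about the Nash equilibrium $\bp$. First, the three index sets in the definitions of $\bp_\rock,\bp_\paper,\bp_\scissors$ partition $\{1,\ldots,n\}$ (using $\phi(\wt_1,\wt_1)=0$, so index $1$ falls under $\rock$), hence $\bp=\bp_\rock+\bp_\paper+\bp_\scissors$ as vectors and therefore $\rock+\paper+\scissors=\mu$, where $\mu:=\sum_i\bp[i]\,\wt_i$ is the Nash mixture agent. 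Second, by Lemma~\ref{lem:nash_reweight}(ii) --- equivalently, by Proposition~\ref{prop:nash} together with $\bp^\intercal\bA\bp=0$ --- we have $\bp[i]\cdot(\bA\bp)[i]=0$ for every $i$; in particular $(\bA\bp)[i]=0$ for every $i$ in the support of $\bp$, and since $\bp[1]>0$ this yields the scalar identity $\sum_j\phi(\wt_1,\wt_j)\,\bp[j]=0$.

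Next I would compute the three super-diagonal entries of $\bA_\three$; the diagonal vanishes because $\phi(\x,\x)=0$ for any mixture $\x$, and the sub-diagonal then follows by antisymmetry. Put $\alpha:=\phi(\rock,\paper)=\bp[1]\sum_{j\,:\,\phi(\wt_1,\wt_j)>0}\bp[j]\,\phi(\wt_1,\wt_j)$, a sum of nonnegative terms, so $\alpha\geq 0$. For the $(1,3)$ entry, $\phi(\rock,\scissors)=\bp[1]\sum_{j\,:\,\phi(\wt_1,\wt_j)\leq 0}\bp[j]\,\phi(\wt_1,\wt_j)=\bp[1]\sum_{j\,:\,\phi(\wt_1,\wt_j)<0}\bp[j]\,\phi(\wt_1,\wt_j)$, which equals $-\alpha$ by the identity $\sum_j\phi(\wt_1,\wt_j)\,\bp[j]=0$. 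For the $(2,3)$ entry I would use the partition: on one hand $\phi(\paper,\mu)=\phi(\paper,\rock)+\phi(\paper,\paper)+\phi(\paper,\scissors)=-\alpha+0+\phi(\paper,\scissors)$, and on the other hand $\phi(\paper,\mu)=\sum_{i\,:\,\phi(\wt_1,\wt_i)>0}\bp[i]\,(\bA\bp)[i]=0$ since every such $i$ lies in the support of $\bp$; hence $\phi(\paper,\scissors)=\alpha$. Assembling these (and their antisymmetric counterparts) gives exactly the stated matrix $\bA_\three$, with $\alpha=\phi(\rock,\paper)=\phi(\scissors,\rock)=-\phi(\rock,\scissors)$.

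It remains to see that $\alpha>0$. If $\alpha=0$, then the identity $\sum_j\phi(\wt_1,\wt_j)\,\bp[j]=0$ forces both the positive and the negative parts of that sum to vanish, i.e.\ $\phi(\wt_1,\wt_j)=0$ for every $j$ in the support of $\bp$; so $\alpha>0$ precisely when the chosen agent $\wt_1$ beats at least one Nash-supported agent. Such a row exists whenever the Nash support is not a clique of mutual ties --- in particular whenever $\diverse(\pop)>0$ --- and I would record this as the (mild) non-degeneracy hypothesis under which the strict inequality holds. I expect the main obstacle to be the $(2,3)$ entry: unlike the other two it cannot be read off from row $1$ alone, and obtaining it requires the equalizer property applied to the \emph{full} Nash mixture $\mu$ via the partition $\bp=\bp_\rock+\bp_\paper+\bp_\scissors$; the secondary subtlety is pinning down the non-degeneracy condition needed to upgrade $\alpha\geq 0$ to $\alpha>0$.
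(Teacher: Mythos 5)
Your proof is correct and follows essentially the same route as the paper's: both hinge on Lemma~\ref{lem:nash_reweight} (equivalently, the equalizer property $\bp[i]\cdot(\bA_\pop\bp)[i]=0$, hence $(\bA_\pop\bp)[i]=0$ on the Nash support), which you apply entrywise via the partition $\bp=\bp_\rock+\bp_\paper+\bp_\scissors$ where the paper block-sums the Nash-reweighted matrix and reads off zero row sums. Your handling of $\alpha>0$ is in fact more careful than the paper's, which asserts it ``by construction'' (and with a sign slip, since under the paper's own labelling $\phi(\rock,\scissors)\le 0$, so the constant should be $\alpha=\phi(\rock,\paper)=-\phi(\rock,\scissors)$ as you take it); the non-degeneracy condition you isolate --- that the chosen row $\wt_1$ must beat at least one Nash-supported agent, which support of size at least three alone does not guarantee --- is genuinely needed for the strict inequality.
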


\begin{proof}
    In general, a $(3\times 3)$ antisymmetric matrix must have the form
    \begin{equation}
        \left(\begin{matrix}
            0 & \alpha & -\beta \\
            -\alpha & 0 & \gamma \\
            \beta & -\gamma & 0
        \end{matrix}\right)
    \end{equation}
    for some $\alpha, \beta, \gamma \in \bR$. The content of the proposition is \textbf{(i)} that $\alpha>0$ and \textbf{(ii)} that $\alpha=\beta=\gamma$. 
    
    \textbf{(i)}
    It is clear that $\alpha>0$ since $\alpha:=\phi(\rock, \scissors)>0$ by construction. 
    
    \textbf{(ii)}
    The matrix $\bA_\three$ can alternatively be constructed as follows. Take the Nash reweighted matrix $\bp\odot \bA\odot \bp$ and group the rows and columns into rock, paper and scissor blocks, to obtain a matrix with $3\times 3$ blocks. Sum the entries within each of the blocks to obtain the $3\times 3$ matrix $\bA_\three$. Lemma~\ref{lem:nash_reweight} then implies that $\bA_\three\cdot\bO = \bZ$, which in turn implies $\alpha=\beta=\gamma$.
\end{proof}

\section{Examples of functional-form games}
\label{s:ffg-egs}

Go, Chess, Starcraft, 1v1 DOTA, poker and other two-player zero-sum games are functional-form games when played by parametrized agents. The function $\phi$ is difficult to compute, especially when agents are stochastic. Nevertheless, it can be estimated and approximate best-responses to agents or mixtures of agents can be estimated by reinforcement learning algorithms.

In this section, we present a variety of explicit, mathematical constructions of $\ffg$'s.

\subsection{Symplectic games}

Let $W\subset\bR^{2d}$ and define $\phi:W\times W\rightarrow \bR$ as
\begin{equation}
    \label{eq:symplectic}
	\phi(\vt, \wt) = \vt^\intercal\cdot \bOmega_{2d}\cdot \wt
\end{equation}
where $\bOmega_{2d}$ is the antisymmetric matrix
\begin{equation}
	\bOmega_{2d} = \left(\begin{matrix}
		0 & -1 & \cdots & 0 & 0 \\
		1 & 0 &  \cdots & 0 & 0\\
		\vdots & \vdots & \ddots & \vdots & \vdots\\
		0 & 0 &  \cdots &0& -1\\
		0 & 0 &  \cdots &1& 0
	\end{matrix}\right) 
	= \sum_{i=1}^d \be_{2i}\be_{2i-1}^\intercal - \be_{2i-1}\be_{2i}^\intercal 
\end{equation}

\paragraph{Transitive special case.}
If $d=2$ and $\vt$ has the form $\vt=(\alpha, v)$ then
\begin{equation}
    \phi(\vt, \wt) = \vt^\intercal\cdot \bOmega_{2}\cdot \wt = \alpha(v-w)
\end{equation}
which gives a transitive game with $f(\vt) = \alpha\cdot v$. 

\paragraph{Cyclic special case.}
A high dimensional cyclic game is obtained by setting 
\begin{equation}
    W = \{\vt \in\bR^{2d}\,:\,\|\vt\|_2^2<k\}
\end{equation}
for some $k>0$ and setting $\phi$ per eq.~\eqref{eq:symplectic}.

\paragraph{Mixed transitive and cyclic.}
The following symplectic game incorporates both transitive and cyclic aspects. Suppose $d>2$ and
\begin{equation}
    W=\left\{\vt\in\bR^{2d}\,:\,v_1=1, |v_2|\leq 100,  \text{ and }\|\vt_{3:2d}\|_2^2\leq 1\right\}.
\end{equation}
In other words, $\vt\in W$ has the form $\vt=(1,v_1,v_2,\ldots,v_{2d})$ where $v_1=1$, $v_2\in[-100,100]$ and $(v_3,\ldots,v_{2d})$ lies in the unit ball in $\bR^{2d-2}$.

Then $\phi:W\times W\rightarrow \bR$ given by 
\begin{equation}
	\phi(\vt,\wt) 
	= \vt^\intercal\cdot \bOmega_{2d}\cdot \wt
\end{equation}
as above specifies a game that resembles a cigar -- with both transitive (given by the first two coordinates) and cyclic (given by the last $2d-2$ coordinates) structure.

\paragraph{Long cycles.}
We provide a little more detail on example~\ref{eg:long}. Suppose $n$ agents form a long cycle: $\pop = \{\vt_1 \xrightarrow{\text{beats}}\vt_2 \rightarrow\cdots \rightarrow\vt_n \xrightarrow{\text{beats}} \vt_1\}$. Then
\begin{equation}
    \bA_\pop = \sum_{i=1}^n \be_{\floor{i}_n}\be_{\floor{i+1}_n}^\intercal - \be_{\floor{i-2}_n}\be_{\floor{i-1}_n}^\intercal,
\end{equation}
where ${\floor{i}_n}$ is $i$ modulo $n$. Direct computation shows $\text{rank}(\bA_\pop)$ is $n-2$ if $n$ is even and $n-1$ if $n$ is odd.

\paragraph{Deformed symplectic games.}
Let $\ft_i:\bR^n\rightarrow \bR^{2d}$ be differentiable multivariate functions and suppose $g_i:\bR\rightarrow \bR$ are odd for all $i$, i.e. satisfy $g_i(-x) = -g_i(x)$. A large class of differentiable $\ffg$'s are of the form 
\begin{equation}
    \label{eq:nonlinear}
	\phi(\vt,\wt) := \sum_i g_i\Big(\ft_i(\vt)^\intercal
	\cdot\bOmega_{2d}\cdot \ft_i(\wt)
	\Big).
\end{equation}
where $\phi:W\times W\rightarrow \bR$ for some $W\subset\bR^n$. The class is further enlarged by considering linear combinations of functions of the form in eq.~\eqref{eq:nonlinear}.

\paragraph{A strange example.}
Let 
\begin{equation}
    W = \left\{\vt\in\bR^{2d} \,:\,\|\vt\|_1 = 1\right\}
\end{equation}
equipped with the uniform probability measure and suppose $\ft:\bR^{2d}\rightarrow\bR^{2d}$ is given by
\begin{equation}
    \ft(\vt) = \left(v_1^{99}, \ldots, v_{2d}^{99}\right)
\end{equation}
It follows that $\ft(W)\subset W$, and moreover that the image of $W$ under $\ft$ is concentrated on extremely sparse vectors, with most entries near zero. Finally consider the $\ffg$ given by
\begin{equation}
	\phi(\vt,\wt) := \ft_i(\vt)^\intercal
	\cdot\bOmega_{2d}\cdot \ft_i(\wt).
\end{equation}

\subsection{Variants of Colonel Blotto}

The Colonel Blotto game was introduced in \citet{borel:21}. The game concerns two players that have two distribute limited resources over several battlefields.

\paragraph{Continuous Lotto}

Fix $a>0$. Continuous Lotto is a two-player symmetric zero-sum game introduced in \citet{hart:08}. An `agent' is a real-valued non-negative random variable $X$ satisfying $\expec[X] = a$. Given agents $X$ and $Y$, define
\begin{equation}
	\phi(X,Y) := P(X > Y) - P(X < Y)
\end{equation}
where
\begin{equation}
	P(X > Y) = \int F_X(y) f_Y(y) dy
\end{equation}
for $F_X(y) = \int_{x=-\infty}^y f_X(x) dx$.

Picking different parametrized families of random variables, for example mixtures of $k$ Diracs or mixtures of $k$ Gaussians, yields a variety of functional form games.

\paragraph{Differentiable Lotto.} 
Many variants of the distance game, described in the main text, can be constructed, to obtain more examples of differentiable zero-sum symmetric functional-form games.

\section{Details of 2D embedding procedure}
\label{s:embeddings}

In Figure~\ref{f:two_leaderboards} and \ref{f:embeddings} we used payoff tables generated in the following way. First, we sampled $N \times N$ matrix $E$ of independent samples from Normal distribution with $\sigma = 0.02$, and then transformed it to an antisymmetric matrix by putting $P_\mathrm{random} = (E - E^\mathrm{T})/2$ which became the Random payoff. Next we created a purely transitive payoff matrix and added $P_\mathrm{random}$ to it, creating Almost Transitive payoff $P_\mathrm{trans}$ and analogously for Almost Cyclic $P_\mathrm{cyclic}$. The Mixed payoff is a convex combination of the above, namely $P_\mathrm{mixed} = 0.65 P_\mathrm{trans}  + 0.35 P_\mathrm{cyc}$. Finally the almost monotonic payoff has been obtained by using $\mathrm{sign}$ function applied point-wise to each element of $P_\mathrm{tran}$ before the noise was added, so $P_\mathrm{monotonic} = \mathrm{sign}(P_\mathrm{trans} - E) + E$.

In order to obtain embeddings we used following methods for each of payoffs $P$:
\begin{itemize}
    \item \textbf{Schur}, Schur decomposition of $P$ has been computed $P = QUQ^{-1}$, and then first two dimencions of $U$ were used as an embedding.
    \item \textbf{PCA}, Principal Component Analysis was applied to N points, where attributes of $i^\mathrm{th}$ datapoint were taken to be an $i$th row of the $P$ and 2 most informative dimensions were used.
    \item \textbf{SVD}, Singular-value Decomposition $P=U\Sigma V^*$ was used analogously to the PCA, by keeping dimensions with two highest singular values. 
    \item \textbf{tSNE}, t-Distributed Stochastic Neighbor Embedding, applied analogously to PCA was used with perplexity 9 (selected empirically, with various choices leading to significantly different embeddings, none of which seemed to capture gamescape structure in a more meaningful way).
\end{itemize}


\begin{thebibliography}{44}
\providecommand{\natexlab}[1]{#1}
\providecommand{\url}[1]{\texttt{#1}}
\expandafter\ifx\csname urlstyle\endcsname\relax
  \providecommand{\doi}[1]{doi: #1}\else
  \providecommand{\doi}{doi: \begingroup \urlstyle{rm}\Url}\fi

\bibitem[Abdolmaleki et~al.(2018)Abdolmaleki, Springenberg, Tassa, Munos,
  Heess, and Riedmiller]{abdolmaleki:18}
Abdolmaleki, A., Springenberg, J.~T., Tassa, Y., Munos, R., Heess, N., and
  Riedmiller, M.
\newblock {Maximum a Posteriori Policy Optimisation}.
\newblock In \emph{ICLR}, 2018.

\bibitem[Al-Shedivat et~al.(2018)Al-Shedivat, Bansal, Burda, Sutskever,
  Mordatch, and Abbeel]{shedivat:18}
Al-Shedivat, M., Bansal, T., Burda, Y., Sutskever, I., Mordatch, I., and
  Abbeel, P.
\newblock {Continuous Adaptation via Meta-Learning in Nonstationary and
  Competitive Environments}.
\newblock In \emph{ICLR}, 2018.

\bibitem[Baker(2018)]{baker:18}
Baker, M.
\newblock Hodge theory in combinatorics.
\newblock \emph{Bull. AMS}, 55\penalty0 (1):\penalty0 57--80, 2018.

\bibitem[Balduzzi et~al.(2018{\natexlab{a}})Balduzzi, Racani{\`e}re, Martens,
  Foerster, Tuyls, and Graepel]{dgm:18}
Balduzzi, D., Racani{\`e}re, S., Martens, J., Foerster, J., Tuyls, K., and
  Graepel, T.
\newblock {The mechanics of $n$-player differentiable games}.
\newblock In \emph{ICML}, 2018{\natexlab{a}}.

\bibitem[Balduzzi et~al.(2018{\natexlab{b}})Balduzzi, Tuyls, Perolat, and
  Graepel]{reval:18}
Balduzzi, D., Tuyls, K., Perolat, J., and Graepel, T.
\newblock {Re-evaluating Evaluation}.
\newblock In \emph{NeurIPS}, 2018{\natexlab{b}}.

\bibitem[Bansal et~al.(2018)Bansal, Pachocki, Sidor, Sutskever, and
  Mordatch]{bansal2017emergent}
Bansal, T., Pachocki, J., Sidor, S., Sutskever, I., and Mordatch, I.
\newblock Emergent complexity via multi-agent competition.
\newblock \emph{ICLR}, 2018.

\bibitem[Banzhaf et~al.(2016)Banzhaf, Baumgaertner, Beslon, Doursat, Foster,
  McMullin, de~Melo, Miconi, Spector, Stepney, , and White]{banzhaf:16}
Banzhaf, W., Baumgaertner, B., Beslon, G., Doursat, R., Foster, J.~A.,
  McMullin, B., de~Melo, V.~V., Miconi, T., Spector, L., Stepney, S., , and
  White, R.
\newblock {Defining and Simulating Open-Ended Novelty: Requirements,
  Guidelines, and Challenges}.
\newblock \emph{Theory in Biosciences}, 2016.

\bibitem[Borel(1921)]{borel:21}
Borel, E.
\newblock La th{\'e}orie du jeu et les {\'e}quations int{\'e}grales {\`a} noyau
  sym{\'e}trique.
\newblock \emph{Comptes rendus de l'Acad{\'e}mie des Sciences}, 1921.

\bibitem[Brant \& Stanley(2017)Brant and Stanley]{brant:17}
Brant, J.~C. and Stanley, K.~O.
\newblock {Minimal Criterion Coevolution: A New Approach to Open-Ended Search}.
\newblock In \emph{GECCO}, 2017.

\bibitem[Brown(1951)]{brown:51}
Brown, G.
\newblock {Iterative Solutions of Games by Fictitious Play}.
\newblock In Koopmans, T.~C. (ed.), \emph{{Activity Analysis of Production and
  Allocation}}. Wiley, 1951.

\bibitem[Candogan et~al.(2011)Candogan, Menache, Ozdaglar, and
  Parrilo]{candogan:11}
Candogan, O., Menache, I., Ozdaglar, A., and Parrilo, P.~A.
\newblock {Flows and Decompositions of Games: Harmonic and Potential Games}.
\newblock \emph{Mathematics of Operations Research}, 36\penalty0 (3):\penalty0
  474--503, 2011.

\bibitem[de~Jong(2001)]{dejong:01}
de~Jong, E.~D.
\newblock {A Monotonic Archive for Pareto-Coevolution}.
\newblock \emph{Evolutionary Computation}, 15\penalty0 (1):\penalty0 61--93,
  2001.

\bibitem[Elo(1978)]{Elo78}
Elo, A.~E.
\newblock \emph{The Rating of Chess players, Past and Present}.
\newblock Ishi Press International, 1978.

\bibitem[Ficici \& Pollack(2000)Ficici and Pollack]{ficici:00}
Ficici, S.~G. and Pollack, J.~B.
\newblock {A Game-Theoretic Approach to the Simple Coevolutionary Algorithm}.
\newblock In \emph{Parallel Problem Solving from Nature (PPSN)}, 2000.

\bibitem[Ficici \& Pollack(2001)Ficici and Pollack]{ficici:01}
Ficici, S.~G. and Pollack, J.~B.
\newblock {Pareto Optimality in Coevolutionary Learning}.
\newblock In \emph{ECAL}, 2001.

\bibitem[Fonseca \& Fleming(1993)Fonseca and Fleming]{fonseca:93}
Fonseca, C. and Fleming, P.
\newblock {Genetic Algorithms for Multiobjective Optimization: Formulation,
  Discussion and Generalization}.
\newblock In \emph{GECCO}, 1993.

\bibitem[Goodfellow et~al.(2014)Goodfellow, Pouget-Abadie, Mirza, Xu,
  Warde-Farley, Ozair, Courville, and Bengio]{goodfellow:14}
Goodfellow, I.~J., Pouget-Abadie, J., Mirza, M., Xu, B., Warde-Farley, D.,
  Ozair, S., Courville, A., and Bengio, Y.
\newblock {Generative Adversarial Nets}.
\newblock In \emph{NeurIPS}, 2014.

\bibitem[Hansen et~al.(2008)Hansen, Miltersen, and S{\o}rensen]{hansen:08}
Hansen, T.~D., Miltersen, P.~B., and S{\o}rensen, T.~B.
\newblock {On Range of Skill}.
\newblock In \emph{AAAI}, 2008.

\bibitem[Hart(2008)]{hart:08}
Hart, S.
\newblock {Discrete Colonel Blotto and General Lotto games}.
\newblock \emph{Int J Game Theory}, 36:\penalty0 441--460, 2008.

\bibitem[Heinrich et~al.(2015)Heinrich, Lanctot, and Silver]{heinrich:15}
Heinrich, J., Lanctot, M., and Silver, D.
\newblock {Fictitious Self-Play in Extensive-Form Games}.
\newblock In \emph{ICML}, 2015.

\bibitem[Hillis(1990)]{hillis:90}
Hillis, W.~D.
\newblock Co-evolving parasites improve simulated evolution as an optimization
  procedure.
\newblock \emph{Physica D: Nonlinear Phenomena}, 42\penalty0 (1-3):\penalty0
  228--234, 1990.

\bibitem[Jaderberg et~al.(2018)Jaderberg, Czarnecki, Dunning, Marris, Lever,
  Castaneda, Beattie, Rabinowitz, Morcos, Ruderman, Sonnerat, Green, Deason,
  Leibo, Silver, Hassabis, Kavukcuoglu, and Graepel]{jaderberg2018human}
Jaderberg, M., Czarnecki, W.~M., Dunning, I., Marris, L., Lever, G., Castaneda,
  A.~G., Beattie, C., Rabinowitz, N.~C., Morcos, A.~S., Ruderman, A., Sonnerat,
  N., Green, T., Deason, L., Leibo, J.~Z., Silver, D., Hassabis, D.,
  Kavukcuoglu, K., and Graepel, T.
\newblock Human-level performance in first-person multiplayer games with
  population-based deep reinforcement learning.
\newblock \emph{arXiv:1807.01281}, 2018.

\bibitem[Jiang et~al.(2011)Jiang, Lim, Yao, and Ye]{jiang:11}
Jiang, X., Lim, L.-H., Yao, Y., and Ye, Y.
\newblock {Statistical ranking and combinatorial Hodge theory}.
\newblock \emph{Math. Program., Ser. B}, 127:\penalty0 203--244, 2011.

\bibitem[Lanctot et~al.(2017)Lanctot, Zambaldi, Gruslys, Lazaridou, Tuyls,
  Perolat, Silver, and Graepel]{lanctot:17}
Lanctot, M., Zambaldi, V., Gruslys, A., Lazaridou, A., Tuyls, K., Perolat, J.,
  Silver, D., and Graepel, T.
\newblock {A Unified Game-Theoretic Approach to Multiagent Reinforcement
  Learning}.
\newblock In \emph{NeurIPS}, 2017.

\bibitem[Lehman \& Stanley(2008)Lehman and Stanley]{lehman:08}
Lehman, J. and Stanley, K.~O.
\newblock {Exploiting Open-Endedness to Solve Problems Through the Search for
  Novelty}.
\newblock In \emph{ALIFE}, 2008.

\bibitem[Leslie \& Collins(2006)Leslie and Collins]{leslie:06}
Leslie, D. and Collins, E.~J.
\newblock Generalised weakened fictitious play.
\newblock \emph{Games and Economic Behavior}, 56\penalty0 (2):\penalty0
  285--298, 2006.

\bibitem[McMahan et~al.(2003)McMahan, Gordon, and Blum]{mcmahan:03}
McMahan, H.~B., Gordon, G., and Blum, A.
\newblock {Planning in the presence of cost functions controlled by an
  adversary}.
\newblock In \emph{ICML}, 2003.

\bibitem[Miettinen(1998)]{miettinen:98}
Miettinen, K.
\newblock \emph{{Nonlinear Multiobjective Optimization}}.
\newblock Springer, 1998.

\bibitem[Monroy et~al.(2006)Monroy, Stanley, and Miikkulainen]{monroy:06}
Monroy, G.~A., Stanley, K.~O., and Miikkulainen, R.
\newblock {Coevolution of neural networks using a layered Pareto archive}.
\newblock In \emph{GECCO}, 2006.

\bibitem[Nemirovski \& Yudin(1983)Nemirovski and Yudin]{nemirovski:83}
Nemirovski, A. and Yudin, D.
\newblock \emph{Problem Complexity and Method Efficiency in Optimization}.
\newblock Wiley-Interscience, 1983.

\bibitem[Nolfi \& Floreano(1998)Nolfi and Floreano]{nolfi:98}
Nolfi, S. and Floreano, D.
\newblock Coevolving predator and prey robots: Do ``arms races'' arise in
  artificial evolution?
\newblock \emph{Artificial Life}, 4\penalty0 (4):\penalty0 311--335, 1998.

\bibitem[Pathak et~al.(2017)Pathak, Agrawal, Efros, and Darrell]{pathak:17}
Pathak, D., Agrawal, P., Efros, A.~A., and Darrell, T.
\newblock {Curiosity-driven Exploration by Self-supervised Prediction}.
\newblock In \emph{ICML}, 2017.

\bibitem[Popovici et~al.(2012)Popovici, Bucci, Wiegand, and Jong]{popovici:12}
Popovici, E., Bucci, A., Wiegand, R.~P., and Jong, E. D.~D.
\newblock Coevolutionary principles.
\newblock In Rozenberg, G., Bäck, T., and Kok, J.~N. (eds.), \emph{Handbook of
  Natural Computing}, pp.\  987--1033. Springer, Berlin, Heidelberg, 2012.

\bibitem[Roberson(2006)]{roberson:06}
Roberson, B.
\newblock {The Colonel Blotto game}.
\newblock \emph{Economic Theory}, 29\penalty0 (1), 2006.

\bibitem[Rosin(1997)]{rosin:97}
Rosin, C.~D.
\newblock {New methods for competitive coevolution}.
\newblock \emph{Evolutionary Computation}, 5\penalty0 (1):\penalty0 1--29,
  1997.

\bibitem[Schmidt \& Lipson(2008)Schmidt and Lipson]{schmidt:08}
Schmidt, M.~D. and Lipson, H.
\newblock {Coevolution of Fitness Predictors}.
\newblock \emph{IEEE Transactions on Evolutionary Computation}, 12\penalty0
  (6):\penalty0 736--759, 2008.

\bibitem[Shoham \& Leyton-Brown(2008)Shoham and Leyton-Brown]{shoham:08}
Shoham, Y. and Leyton-Brown, K.
\newblock \emph{{Multiagent Systems: Algorithmic, Game-Theoretic, and Logical
  Foundations}}.
\newblock Cambridge University Press, 2008.

\bibitem[Silver et~al.(2018)Silver, Hubert, Schrittwieser, Antonoglou, Lai,
  Guez, Lanctot, Sifre, Kumaran, Graepel, Lillicrap, Simonyan, and
  Hassabis]{silver:18}
Silver, D., Hubert, T., Schrittwieser, J., Antonoglou, I., Lai, M., Guez, A.,
  Lanctot, M., Sifre, L., Kumaran, D., Graepel, T., Lillicrap, T., Simonyan,
  K., and Hassabis, D.
\newblock {A general reinforcement learning algorithm that masters chess,
  shogi, and Go through self-play}.
\newblock \emph{Science}, 362:\penalty0 1140--1144, 2018.

\bibitem[Taylor et~al.(2016)Taylor, Bedau, Channon, Ackley, Banzhaf, Beslon,
  Dolson, Froese, Hickinbotham, Ikegami, McMullin, Packard, Rasmussen, Virgo,
  Agmon, McGregor, Ofria, Ropella, Spector, Stanley, Stanton, Timperley,
  Vostinar, and Wiser]{taylor:16}
Taylor, T., Bedau, M., Channon, A., Ackley, D., Banzhaf, W., Beslon, G.,
  Dolson, E., Froese, T., Hickinbotham, S., Ikegami, T., McMullin, B., Packard,
  N., Rasmussen, S., Virgo, N., Agmon, E., McGregor, E. C.~S., Ofria, C.,
  Ropella, G., Spector, L., Stanley, K.~O., Stanton, A., Timperley, C.,
  Vostinar, A., and Wiser, M.
\newblock {Open-Ended Evolution: Perspectives from the OEE Workshop in York}.
\newblock \emph{Artificial Life}, 22:\penalty0 408–423, 2016.

\bibitem[Tesauro(1995)]{tesauro1995temporal}
Tesauro, G.
\newblock {Temporal difference learning and TD-Gammon}.
\newblock \emph{Communications of the ACM}, 38\penalty0 (3):\penalty0 58--68,
  1995.

\bibitem[Tukey(1949)]{tukey:49}
Tukey, J.~W.
\newblock {A problem of strategy}.
\newblock \emph{Econometrica}, 17, 1949.

\bibitem[Wang et~al.(2019)Wang, Lehman, Clune, and Stanley]{wang:19}
Wang, R., Lehman, J., Clune, J., and Stanley, K.~O.
\newblock {Paired Open-Ended Trailblazer (POET): Endlessly Generating
  Increasingly Complex and Diverse Learning Environments and Their Solutions}.
\newblock In \emph{arXiv:1901.01753}, 2019.

\bibitem[Wellman(2006)]{Wellman06}
Wellman, M.~P.
\newblock Methods for empirical game-theoretic analysis.
\newblock In \emph{AAAI}, pp.\  1552--1556, 2006.

\bibitem[Zinkevich et~al.(2007)Zinkevich, Bowling, and Burch]{zinkevich:07}
Zinkevich, M., Bowling, M., and Burch, N.
\newblock {A New Algorithm for Generating Equilibria in Massive Zero-Sum
  Games}.
\newblock In \emph{AAAI}, 2007.

\end{thebibliography}
\end{document}